\documentclass{article} % For LaTeX2e
\usepackage{arxiv_preprint,times}

\usepackage{amsmath,amsfonts,bm}

\def\eqref#1{equation~\ref{#1}}
\def\1{\bm{1}}

\DeclareMathAlphabet{\mathsfit}{\encodingdefault}{\sfdefault}{m}{sl}
\SetMathAlphabet{\mathsfit}{bold}{\encodingdefault}{\sfdefault}{bx}{n}

\usepackage[hypertexnames=false]{hyperref}
\usepackage{url}
\usepackage{graphicx}
\usepackage{booktabs}
\usepackage{array}
\usepackage{colortbl}
\usepackage{multirow}
\usepackage{wrapfig}
\usepackage{amssymb}
\usepackage{amsmath}
\usepackage{amsthm}
\usepackage{algorithm}
\usepackage{algpseudocode}
\usepackage{placeins}

\newtheorem{proposition}{Proposition}

\theoremstyle{definition}
\newtheorem{definition}{Definition}

\title{Gradient--Update Mismatch: Rethinking Conflict-Free Training of\\[-0.15em]
Physics-Informed Neural Networks}

\author{
\begin{minipage}{0.98\textwidth}
\centering
{\fontsize{9}{10}\selectfont
\normalfont
\textbf{Jing Xiao}\textsuperscript{1,2,3} \quad
\textbf{Xinhai Chen}\textsuperscript{1,2,3,*} \quad
\textbf{Qinglin Wang}\textsuperscript{1,2,3} \quad
\textbf{Menghan Jia}\textsuperscript{2,3} \\
\textbf{Zhiquan Lai}\textsuperscript{2,3} \quad
\textbf{Dongsheng Li}\textsuperscript{2,3} \quad
\textbf{Jie Liu}\textsuperscript{1,2,3} \quad
\textbf{Tiejun Li}\textsuperscript{3}}
\\[0.3em]
{\fontsize{8}{9}\selectfont\itshape
\renewcommand{\arraystretch}{0.95}%
\begin{tabular}{@{}c@{}}
\textsuperscript{1}Laboratory of Digitizing Software for Frontier Equipment,\\
National University of Defense Technology, Changsha 410073, China\\[0.10em]
\textsuperscript{2}National Key Laboratory of Parallel and Distributed Computing,\\
National University of Defense Technology, Changsha 410073, China\\[0.10em]
\textsuperscript{3}School of Computer Science and Technology,\\
National University of Defense Technology, Changsha 410073, China\\[0.10em]
\textsuperscript{*}Corresponding author: chenxinhai16@nudt.edu.cn
\end{tabular}}
\end{minipage}
}

\makeatletter
\def\@maketitle{%
  \vbox{\hsize\textwidth
    \centering
    {\fontsize{14}{16}\selectfont\bfseries\rmfamily\@title\par}
    \vskip 0.12in
    \@author\par
    \vskip 0.14in}}
\makeatother

\newcommand{\imprsingle}[1]{\bfseries\makebox[2.4em][c]{#1}}
\newcommand{\imphead}{\makebox[2.4em][c]{Imp.}}
\newcommand{\qtwoentry}[2]{\makebox[3.2em][r]{#1}\(\pm\)\makebox[3.2em][l]{#2}}
\preprintfinalcopy
\begin{document}

\maketitle

% Keep the author-visible branch and use a neutral arXiv header.
\fancyhf{}
\fancyhead[C]{\small arXiv Preprint}
\fancyfoot[C]{\thepage}
\renewcommand{\headrulewidth}{0.4pt}

\begin{abstract}
Training Physics-Informed Neural Networks (PINNs) requires jointly optimizing
physics residual and initial/boundary condition loss terms, which often induce
conflicting gradients. Gradient surgery methods mitigate this issue by constructing directions from
loss-specific gradients to reduce conflict before optimizer transformation.
However, even when the constructed direction is conflict-free, this property
may not be preserved after optimizer transformation. Let $a_t$ denote the
direction constructed by gradient surgery, $u_t$ the optimizer proposal, and
$\mathcal C_t$ the conflict-free cone induced by the loss-specific gradients.
We show that modern optimizers can transform $a_t$ through mechanisms such as
historical state, adaptive scaling,  preconditioning, or decoupled weight decay, so
$a_t\in\mathcal C_t$ does not generally imply $u_t\in\mathcal C_t$.
We refer to this optimizer-induced discrepancy in conflict-freeness between
$a_t$ and $u_t$ as \emph{Gradient--Update Mismatch} (GUM).
Accordingly, we propose \emph{Gradient--Update Alignment} (GUA), which projects
$u_t$ onto $\mathcal C_t$ to obtain the aligned update $p_t$ and applies $p_t$
to the parameters. When the optimizer maintains internal state, GUA further
adjusts this state toward targets reconstructed from the applied update.
We conduct extensive experiments and find that GUM is widespread across momentum, adaptive, and curvature-based optimizers, with conflict rates reaching up to 86.3\%.
Across all PINN settings, GUA achieves conflict-free applied updates and
consistently improves various gradient surgery methods, reducing the relative
$L_2$ error by up to 98.2\% in individual settings.
Data and code are available at
\url{https://github.com/JingXiao10/GUA}.
\end{abstract}

\section{Introduction}
Solving partial differential equations (PDEs) can be computationally expensive,
especially for high-dimensional problems or scenarios requiring repeated
solution queries~\cite{li2020fourier,lu2021learning,kovachki2023neural}.
Physics-Informed Neural Networks (PINNs)~\cite{raissi2019physics} provide a
neural surrogate by incorporating physical constraints through loss terms
associated with PDE residuals and initial and boundary
conditions~\cite{gonon2024overview}. However, the gradients induced by different loss terms can differ substantially in magnitude and direction, leading to gradient conflicts and making PINN training difficult~\cite{zhang2026physics}.

A common strategy for improving PINN training is loss balancing, which adjusts the weights of different loss terms to reduce imbalances in gradient magnitudes among physical constraints. These weights can be set manually or adapted during
training. However, loss balancing does not directly address conflicts in
gradient directions, and the choice of effective weights varies across
problems~\cite{wang2021understanding,liu2021dual,li2022dynamic,bischof2025multi}.
This motivates the use of gradient surgery, which directly modifies
loss-specific gradients to construct directions with reduced conflict. Such
methods were originally developed for multi-task learning (MTL), including
PCGrad~\cite{yu2020gradient}, CAGrad~\cite{liu2021conflict},
IMTL-G~\cite{liu2021towards}, A-MTL~\cite{senushkin2023independent},
UPGrad~\cite{quinton2024jacobian}, and ConFIG~\cite{liu2025config}. These methods have also been adopted in PINNs to mitigate conflicts among the gradients induced by different physical constraints~\cite{liu2025config}.

Despite their methodological differences, these gradient surgery methods follow the same optimization pipeline: they first construct a direction from loss-specific gradients and then pass it to the optimizer. This naturally raises a key question: when gradient surgery constructs a conflict-free direction, does it remain conflict-free after optimizer transformation?
\begin{wrapfigure}[18]{r}{0.57\textwidth}
\vspace{-1em}
\centering
\includegraphics[width=\linewidth]{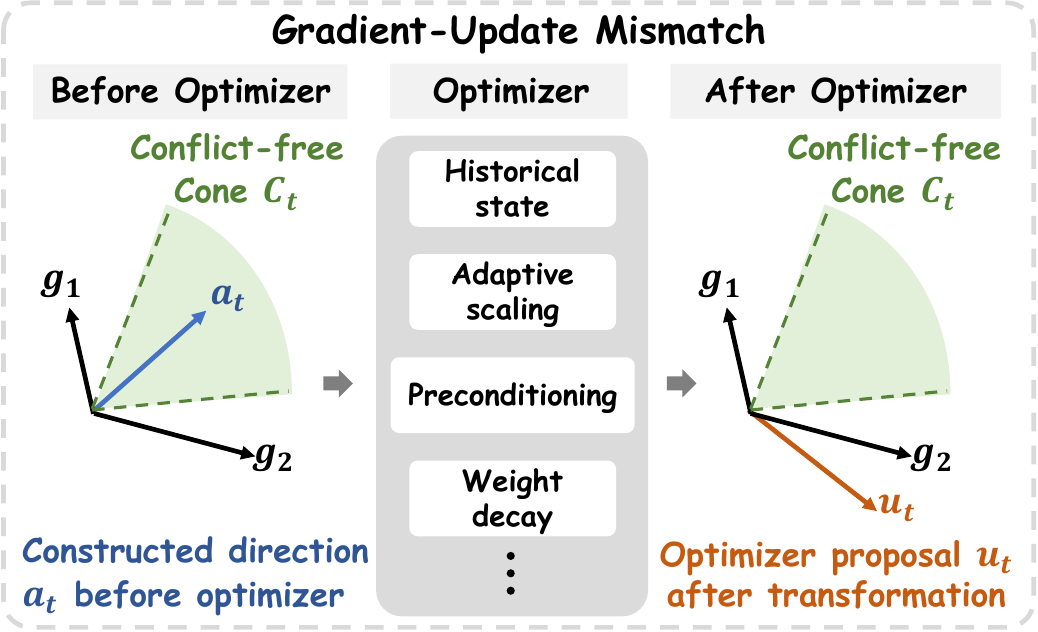}
\caption{Gradient--Update Mismatch. A conflict-free direction $a_t$ before
optimizer transformation may become a conflicting optimizer proposal $u_t$
after transformation.}
\label{fig:overview}
\end{wrapfigure}
Let $a_t$ denote the direction constructed by gradient surgery, $u_t$ the
optimizer proposal, and $\mathcal{C}_t$ the conflict-free cone induced by the
loss-specific gradients at step $t$. We show that a conflict-free direction
constructed by gradient surgery may not remain conflict-free after optimizer
transformation, as illustrated in Figure~\ref{fig:overview}. Formally,
$a_t \in \mathcal{C}_t$ does not generally imply $u_t \in \mathcal{C}_t$.
Modern optimizers can transform $a_t$ through mechanisms such as historical
state, adaptive scaling, preconditioning, or decoupled weight decay~\cite{wang2025gradient},
which can move the optimizer proposal outside the conflict-free cone. We refer
to this optimizer-induced discrepancy in conflict-freeness between $a_t$ and
$u_t$ as \emph{Gradient--Update Mismatch} (GUM).

This observation suggests that conflict handling should consider not only the
direction $a_t$ constructed before optimizer transformation, but also the
optimizer proposal $u_t$ before it is applied to the parameters.
We therefore propose \emph{Gradient--Update Alignment} (GUA), which aligns the
optimizer proposal $u_t$ with the current conflict-free cone $\mathcal C_t$.
Specifically, GUA projects $u_t$ onto $\mathcal C_t$ to obtain the aligned
update $p_t$, which is then applied to the parameters. When the optimizer
maintains internal state, GUA further adjusts this state toward targets
reconstructed from the applied update, reducing the influence of components removed by the
projection on subsequent proposals.

Our contributions are summarized as follows:

1. We identify and formalize a largely overlooked issue, which we term \emph{Gradient--Update Mismatch}: the conflict-free direction $a_t$ constructed by gradient surgery does not necessarily remain conflict-free after being transformed by the optimizer into the actual update proposal $u_t$.

2. We characterize how optimizer transformations can induce GUM. We derive exact
conflict-preservation conditions for fixed-state affine transformations and extend
the analysis to general nonlinear optimizer maps, covering mechanisms such as
historical state, adaptive scaling, preconditioning, and weight decay.

3. We propose \emph{Gradient--Update Alignment}, introducing an update-level perspective on conflict handling. GUA considers conflict-freeness not only for the direction constructed before optimizer transformation, but also for the optimizer proposal before it is applied to the parameters.

4. We conduct extensive experiments demonstrating that GUM is widespread across momentum, adaptive, and curvature-based optimizers, and that GUA achieves conflict-free applied updates across all evaluated PINN settings while consistently improving various gradient surgery methods.

\section{Related Work}
\subsection{PINN Training and Loss Balancing}
PINNs~\cite{raissi2019physics} are often difficult to train due to imbalanced gradient magnitudes, poor conditioning, and different convergence rates across loss
terms~\cite{krishnapriyan2021characterizing,wang2022and}. Loss-balancing methods
address part of this difficulty by manually or adaptively reweighting physical
constraints~\cite{wang2021understanding,liu2021dual,li2022dynamic,bischof2025multi}.
However, reweighting does not directly address conflicts in gradient
directions, motivating gradient surgery methods that operate directly on
loss-specific gradients.

\subsection{Gradient Surgery in Multi-Loss Optimization}
Multi-loss PINN training is closely related to multi-objective optimization and
gradient interference in multi-task learning, where different loss terms
can induce conflicting gradients
~\cite{sener2018multi,chen2018gradnorm,chen2020just}.
Gradient surgery methods address such conflicts by modifying loss-specific
gradients before optimizer transformation. PCGrad~\cite{yu2020gradient} reduces
pairwise conflicts through projection, CAGrad~\cite{liu2021conflict} balances
average descent with worst-case conflict, and IMTL-G~\cite{liu2021towards}
reduces task dominance by enforcing equal gradient projections. A-MTL
~\cite{senushkin2023independent} improves gradient conditioning and alignment,
while UPGrad~\cite{quinton2024jacobian} uses dual-cone projection to obtain a
common descent direction.
ConFIG~\cite{liu2025config,baldan2026physics} further constructs conflict-free directions
with non-negative inner products against loss-specific gradients for PINNs. Despite differing criteria, these methods all perform gradient surgery before optimizer transformation, overlooking the role of the optimizer in shaping the actual parameter update. This observation naturally motivates our work.

\subsection{Optimizer Transformations and Update-Level Alignment}

Modern optimizers generally transform the direction $a_t$ constructed by
gradient surgery before it is applied to the parameters. Such transformations may involve historical state, adaptive scaling, preconditioning, or decoupled weight decay.  Representative examples include momentum
SGD~\cite{sutskever2013importance}, RMSProp~\cite{tieleman2012rmsprop},
Adam~\cite{kingma2014adam}, AdamW~\cite{loshchilov2017decoupled}, and
curvature- or second-order-inspired methods such as
Sophia/SophiaG~\cite{liu2024sophia},
AdaHessian~\cite{yao2021adahessian}, and SOAP~\cite{vyas2025soap}.
Although such optimizer transformations can improve certain aspects of gradient
geometry during training~\cite{wang2025gradient}, they may not preserve the
conflict-free geometry established before optimizer transformation.
Our goal is not to design a new optimizer, but to extend conflict handling from the pre-optimizer direction to the optimizer proposal before it is applied to the parameters.
\section{Gradient--Update Mismatch and Gradient--Update Alignment}
\label{sec:gum_gua}

This section formalizes Gradient--Update Mismatch and introduces
Gradient--Update Alignment  for update-level conflict handling.

\subsection{Preliminaries: Multi-Loss Optimization}

\paragraph{Multi-loss objective.}
Consider an objective with \(m\) loss terms,
\(\mathcal{L}(\theta)=\sum_{i=1}^m\mathcal{L}_i(\theta)\),
where \(\theta\in\mathbb{R}^p\) denotes the model parameters and
\(\mathcal{L}_i\) denotes the \(i\)-th loss term, including any associated
weight. At optimization step $t$, the corresponding loss-specific gradient is
$
g_{i,t}=\nabla_\theta\mathcal L_i(\theta_t).
$

\paragraph{Conflict-free cone.}
For any candidate update direction \(d\), the linearized change of the \(i\)-th loss under
\(\theta_{t+1}=\theta_t-\eta d\) is
$
\mathcal L_i(\theta_t-\eta d)-\mathcal L_i(\theta_t)
=
-\eta\langle g_{i,t},d\rangle+O(\eta^2).
$
Therefore, \(d\) does not increase the \(i\)-th loss to first order if
$
\langle g_{i,t},d\rangle\ge0.
$
The directions that are compatible with all loss-specific gradients form the
conflict-free cone
\begin{equation}
\mathcal C_t
=
\{d\in\mathbb R^p\mid \langle g_{i,t},d\rangle\ge0,\ i=1,\ldots,m\}
=
\{d\in\mathbb R^p\mid G_t^\top d\ge0\},
\label{def:conflict}
\end{equation}
where \(G_t=[g_{1,t},\ldots,g_{m,t}]\). This cone characterizes the current
conflict-free geometry and will be used for update-level diagnosis and
projection.

\paragraph{Constructed direction and optimizer proposal.}
A gradient surgery method constructs a direction \(a_t\) from the
loss-specific gradients \(\{g_{i,t}\}_{i=1}^m\) to reduce gradient conflicts.
Let \(\mathcal O_t\) denote the optimizer transformation at step \(t\), and
let \(s_t\) denote its internal state, such as momentum or adaptive statistics.
The optimizer transforms \(a_t\) into a proposal \(u_t\) and a tentative next
state \(\bar s_{t+1}\):
\begin{equation}
(u_t,\bar s_{t+1})=\mathcal O_t(a_t,s_t).
\label{eq:optimizer_transformation}
\end{equation}
The optimizer proposal $u_t$, rather than the direction $a_t$  constructed by the gradient surgery method, determines the direction of the actual parameter update. Consequently, conflict-freeness of $a_t$ does not necessarily imply conflict-freeness of the actual update. We therefore introduce the notion of Gradient--Update Mismatch to characterize this optimizer-induced discrepancy in conflict-freeness.

\subsection{Gradient--Update Mismatch}
\label{sec:gum}

\begin{definition}[Gradient--Update Mismatch]
\label{def:gum}
At step $t$, a \emph{Gradient--Update Mismatch} (GUM) is an
optimizer-induced discrepancy in conflict-freeness between the constructed
direction $a_t$ and the optimizer proposal $u_t$. Specifically, GUM occurs when
$a_t \in \mathcal{C}_t$ while $u_t \notin \mathcal{C}_t$.
\end{definition}

To quantify conflicts throughout the optimization pipeline, we record
\begin{equation}
\begin{alignedat}{2}
R_g &= \frac{1}{T}\sum_{t=1}^{T}\mathbb{I}\!\left[\exists\, i<j:
\langle g_{i,t},g_{j,t}\rangle<0\right],
&\qquad R_a &= \frac{1}{T}\sum_{t=1}^{T}\mathbb{I}[a_t\notin\mathcal{C}_t],\\
R_u &= \frac{1}{T}\sum_{t=1}^{T}\mathbb{I}[u_t\notin\mathcal{C}_t],
&\qquad R_p &= \frac{1}{T}\sum_{t=1}^{T}\mathbb{I}[p_t\notin\mathcal{C}_t].
\end{alignedat}
\label{eq:conflict_rates}
\end{equation}

Here, $T$ is the total number of optimization steps, and $R_g$, $R_a$, $R_u$, and $R_p$ denote the conflict rates of the raw
loss-specific gradients, constructed direction, optimizer proposal, and aligned
update, respectively, yielding the diagnostic sequence
$R_g \rightarrow R_a \rightarrow R_u \rightarrow R_p$. For theoretical analysis, the above rates are defined using exact conflict conditions with a zero threshold. In experiments, we use a small normalized tolerance only in the conflict-rate diagnostics to account for numerical errors. Detailed specifications are given in Appendix~\ref{app:evaluation_metrics}.
Importantly, the relation between $R_a$ and $R_u$ depends on the guarantee provided by the gradient surgery method. If the method always produces conflict-free directions, then $R_a=0$, and every conflicting optimizer proposal counted by $R_u$ corresponds directly to a GUM event. Thus, under this conflict-free guarantee, any $R_u>0$ provides direct evidence of GUM.

\paragraph{Generalized GUM.}
Some gradient surgery methods do not guarantee conflict-free constructed
directions, so $R_a$ can be nonzero. In this case, the strict event definition
in Definition~\ref{def:gum} does not by itself summarize the aggregate change
in conflict frequency introduced by optimizer transformation. We therefore use
$R_u>R_a$ as a rate-level diagnostic of \emph{generalized GUM}, indicating that
optimizer transformation introduces a net increase in the frequency of
conflicting proposals. This is an aggregate diagnostic rather than an
event-level definition.

\subsection{Conflict Preservation under Optimizer Transformations}
\label{sec:conflict_preservation}

GUM arises when optimizer transformation fails to preserve the conflict-free
geometry of the constructed direction. This motivates us to characterize the
conditions under which such geometry is preserved. To isolate the geometric
mechanisms relevant to conflict preservation, we consider a fixed-state affine
representation at step $t$,
$
T_t(a)=P_ta+b_t.
$
Here, $P_t$ captures scaling and preconditioning effects, while $b_t$ represents an additive term that is independent of the input direction $a$, such as contributions arising from historical state or decoupled weight decay.

\begin{proposition}[Conflict preservation under fixed-state affine transformations]
\label{prop:conflict_preserving}
Let $G_t=[g_{1,t},\ldots,g_{m,t}]$ and
$\mathcal C_t=\{d\in\mathbb R^p\mid G_t^\top d\ge0\}$.
The affine transformation $T_t(a)=P_ta+b_t$ preserves conflict-freeness on
$\mathcal C_t$, i.e., $a\in\mathcal C_t\Rightarrow T_t(a)\in\mathcal C_t$,
if and only if
\begin{equation}
b_t\in\mathcal C_t,
\qquad
P_t^\top g_{i,t}\in
\operatorname{cone}(g_{1,t},\ldots,g_{m,t}),
\quad i=1,\ldots,m.
\label{eq:conflict_preserving_condition}
\end{equation}
\end{proposition}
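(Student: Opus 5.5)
Sufficiency is a direct computation; for necessity we use the facts that the cone is closed under positive scaling and that $0\in\mathcal C_t$, and then invoke Farkas' lemma.

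\emph{Sufficiency.} Assume $b_t\in\mathcal C_t$ and $P_t^\top g_{i,t}=\sum_j \lambda_{ij} g_{j,t}$ with $\lambda_{ij}\ge 0$. For $a\in\mathcal C_t$ we compute
\[
\langle g_{i,t},P_ta+b_t\rangle=\sum_j\lambda_{ij}\langle g_{j,t},a\rangle+\langle g_{i,t},b_t\rangle .
\]
Both terms are nonnegative, so $T_t(a)\in\mathcal C_t$.

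\emph{Necessity, first condition.} Suppose $a\in\mathcal C_t$ implies $T_t(a)\in\mathcal C_t$. Taking $a=0\in\mathcal C_t$ gives $b_t=T_t(0)\in\mathcal C_t$.

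\emph{Necessity, second condition.} Fix $i$ and $a\in\mathcal C_t$. Since $\lambda a\in\mathcal C_t$ for every $\lambda>0$,
\[
\lambda\langle P_t^\top g_{i,t},a\rangle+\langle g_{i,t},b_t\rangle\ge0 .
\]
Dividing by $\lambda$ and letting $\lambda\to\infty$ yields $\langle P_t^\top g_{i,t},a\rangle\ge0$ for all $a\in\mathcal C_t$. Thus $P_t^\top g_{i,t}$ lies in the dual cone $\mathcal C_t^*$. Here $\mathcal C_t=\{d\mid G_t^\top d\ge0\}$ is itself the dual of $K=\operatorname{cone}(g_{1,t},\ldots,g_{m,t})$. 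By Farkas' lemma (bipolar theorem for finitely generated cones), $K$ is closed and $K^{**}=K$, so $\mathcal C_t^*=K$ and the second condition follows.

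The only nontrivial ingredient is this last identification. It needs closedness of the finitely generated cone, which holds for finitely many generators even when they are linearly dependent. If one prefers to avoid the bipolar theorem, argue by contradiction: if $P_t^\top g_{i,t}\notin K$, Farkas' lemma gives some $d$ with $G_t^\top d\ge0$ and $\langle P_t^\top g_{i,t},d\rangle<0$. Applying the scaled argument above to $a=\lambda d$ then produces a violation for large $\lambda$. One should also remark that $b_t$ is fixed (independent of $a$), which is exactly what allows both the $a=0$ step and the scaling step.
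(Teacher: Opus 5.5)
Your proof is correct and follows essentially the same route as the paper. Both arguments take $a=0$ to get $b_t\in\mathcal C_t$, then scale $a\mapsto\lambda a$ with $\lambda\to\infty$ to place $P_t^\top g_{i,t}$ in the dual cone $\mathcal C_t^\ast$, and invoke Farkas' lemma to identify $\mathcal C_t^\ast$ with $\operatorname{cone}(g_{1,t},\ldots,g_{m,t})$; sufficiency is the same direct inner-product computation. Your explicit note that finitely generated cones are closed fills in a detail the paper leaves implicit in its appeal to Farkas' lemma.
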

The proof is provided in Appendix~\ref{app:proof_conflict_preserving}.
Within the fixed-state affine class,
Proposition~\ref{prop:conflict_preserving} exactly characterizes conflict
preservation by separating the effects of the additive term $b_t$ and the
linear transformation $P_t$. However, these preservation conditions need not
hold in general. Positive-scalar transformations with $P_t=\alpha I$,
$\alpha>0$, and $b_t=0$ preserve the cone, with plain SGD corresponding to
the special case $P_t=I$. In contrast, positive coordinate-wise scaling need
not preserve the cone. For example, let $g_{1,t}=(1,0)^\top$, $g_{2,t}=(1,1)^\top$, and
$a_t=(1,-0.5)^\top$. Under $P_t=\operatorname{diag}(0.1,10)$, the resulting
proposal is $u_t=(0.1,-5)^\top$, for which
$\langle g_{2,t},u_t\rangle=-4.9<0$. Therefore,
$u_t\notin\mathcal C_t$, even though $a_t\in\mathcal C_t$, showing that
positive coordinate-wise scaling can destroy conflict-freeness.

Exact optimizer maps may also be nonlinear in $a_t$, as occurs with adaptive
scaling. Appendix~\ref{app:nonlinear_preservation} provides a general
nonlinear characterization, with
Proposition~\ref{prop:conflict_preserving} recovered as the affine special
case. Thus, historical state, adaptive scaling, preconditioning, and weight
decay can all transform a conflict-free $a_t$ into an optimizer proposal
outside $\mathcal C_t$.

\subsection{Gradient--Update Alignment}
\label{sec:gua}

\begin{figure}[t]
\begin{center}
\includegraphics[width=\linewidth]{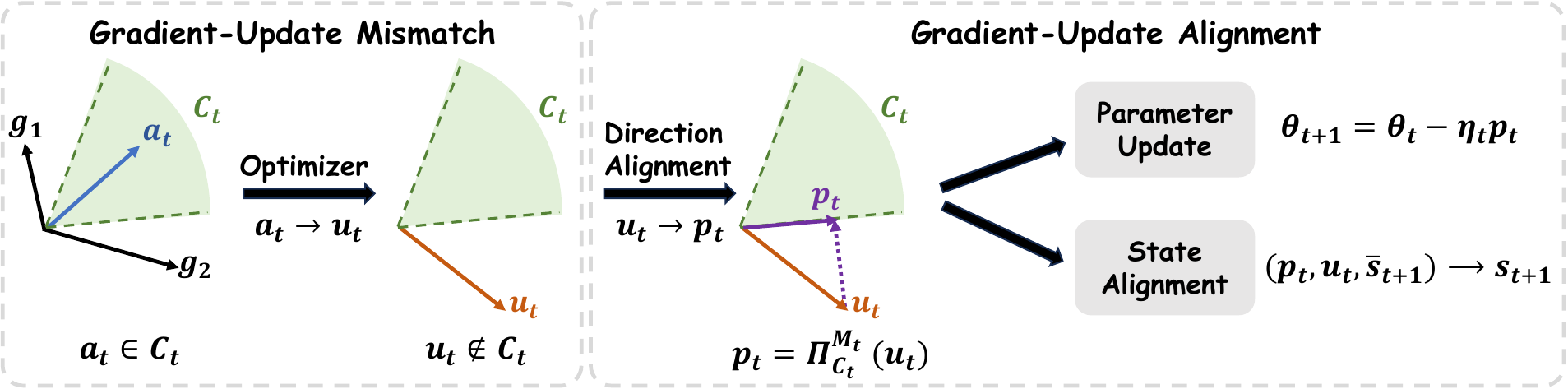}
\end{center}
\caption{Illustration of Gradient--Update Mismatch and Gradient--Update Alignment.
GUA projects the optimizer proposal $u_t$ onto the conflict-free cone
$\mathcal C_t$ to obtain the aligned update $p_t$, applies $p_t$ to the
parameters, and softly aligns the optimizer state with the applied update.}
\label{fig:gua_framework}
\end{figure}

\paragraph{Direction alignment.}
The preceding analysis suggests that conflict handling should extend from the constructed direction to the optimizer proposal, where conflicts directly determine the applied parameter update. Given $u_t$, GUA
projects it onto $\mathcal C_t$ under a positive-definite metric $M_t\succ0$:
\begin{equation}
    p_t
    =
    \Pi_{\mathcal C_t}^{M_t}(u_t)
    =
    \arg\min_{d\in\mathcal C_t}
    \frac12\|d-u_t\|_{M_t}^2.
    \label{eq:gua_projection}
\end{equation}
The projection is solved in the $m$-dimensional dual. For the PINN experiments,
where $m\in\{2,3\}$, we use exact active-set enumeration. For larger task
counts, we solve the nonnegative dual quadratic program. We
use the diagonal Adam as the projection metric. Solver and metric details are provided in
Appendix~\ref{app:projection_solver}, with the metric comparison reported in
Appendix~\ref{app:projection_metric_diagnostic}.
The aligned update is applied as
$\theta_{t+1}=\theta_t-\eta_t p_t$.
By construction, $p_t\in\mathcal C_t$, and $p_t=u_t$ whenever
$u_t\in\mathcal C_t$. Thus, GUA leaves conflict-free proposals unchanged and
projects conflicting proposals onto the conflict-free cone.
To characterize this correction, define the one-step loss change
$
\Delta_{i,t}(d;\eta_t)
=
\mathcal L_i(\theta_t-\eta_t d)-\mathcal L_i(\theta_t),
$
and the conflict violation
$v_t(d)=\max_i[-\langle g_{i,t},d\rangle]_+$.

\begin{proposition}[Local safety and minimum-change projection]
\label{prop:gua_local_guarantee}
Suppose each loss $\mathcal L_i$ is locally smooth near $\theta_t$ and
$u_t\notin\mathcal C_t$. As $\eta_t\to0^+$,
$\left[\max_i\Delta_{i,t}(p_t;\eta_t)\right]_+
=O(\eta_t^2)$ and
$\max_i\Delta_{i,t}(u_t;\eta_t)
\geq \eta_t v_t(u_t)-O(\eta_t^2)$.
Consequently, for all sufficiently small $\eta_t>0$,
$\max_i\Delta_{i,t}(p_t;\eta_t)
< \max_i\Delta_{i,t}(u_t;\eta_t)$.
Moreover, $p_t$ is the closest conflict-free update to $u_t$ under the
$M_t$-metric.
\end{proposition}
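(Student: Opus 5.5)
The plan is a first-order Taylor expansion of each $\Delta_{i,t}$ in $\eta_t$, with the cone membership of $p_t$ and the violation of $u_t$ supplying the sign of the linear term. Local smoothness of $\mathcal L_i$ near $\theta_t$ means the gradient is locally Lipschitz, say with constant $L_i$ on a ball around $\theta_t$. For any fixed direction $d$ and $\eta_t$ small enough that $\theta_t-\eta_t d$ stays in that ball, the descent lemma gives
\[
\bigl|\Delta_{i,t}(d;\eta_t)+\eta_t\langle g_{i,t},d\rangle\bigr|\le \tfrac{L_i}{2}\eta_t^2\|d\|^2 .
\]
I apply this with $d=p_t$ and $d=u_t$. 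Both are fixed vectors at step $t$ and do not depend on $\eta_t$, since the projection in (\ref{eq:gua_projection}) involves only $u_t$, $G_t$ and $M_t$. Hence the $O(\eta_t^2)$ constants are uniform over the finitely many $i$.

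\textbf{Bound for $p_t$.} By construction $p_t\in\mathcal C_t$, so $\langle g_{i,t},p_t\rangle\ge0$ for every $i$. The expansion then gives
\[
\Delta_{i,t}(p_t;\eta_t)\le \tfrac{L_i}{2}\eta_t^2\|p_t\|^2 .
\]
Taking the maximum over $i$ and then the positive part yields $[\max_i\Delta_{i,t}(p_t;\eta_t)]_+=O(\eta_t^2)$.

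\textbf{Bound for $u_t$.} Let $i^\star$ attain the violation, so that $v_t(u_t)=-\langle g_{i^\star,t},u_t\rangle$. This quantity is strictly positive because $u_t\notin\mathcal C_t$. The expansion gives
\[
\max_i\Delta_{i,t}(u_t;\eta_t)\ge \Delta_{i^\star,t}(u_t;\eta_t)\ge \eta_t v_t(u_t)-\tfrac{L_{i^\star}}{2}\eta_t^2\|u_t\|^2 .
\]

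\textbf{Strict comparison.} Combining the two bounds, $\max_i\Delta_{i,t}(p_t;\eta_t)\le C\eta_t^2$ while $\max_i\Delta_{i,t}(u_t;\eta_t)\ge \eta_t v_t(u_t)-C'\eta_t^2$. Since $v_t(u_t)>0$, the strict inequality holds whenever $\eta_t<v_t(u_t)/(C+C')$. Note that $\max_i\Delta_{i,t}(p_t)$ may be negative, but the upper bound by $C\eta_t^2$ is all that is needed.

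\textbf{Minimum-change property.} This is immediate from definition (\ref{eq:gua_projection}): $p_t$ minimizes $\|d-u_t\|_{M_t}$ over $\mathcal C_t$. Existence and uniqueness hold because $\mathcal C_t$ is a nonempty closed convex polyhedral cone (it contains $0$) and $\tfrac12\|d-u_t\|_{M_t}^2$ is strictly convex for $M_t\succ0$.

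\textbf{Main obstacle.} The only delicate point is making the $O(\eta_t^2)$ terms rigorous and uniform. This requires that "locally smooth" be read as $C^{1,1}$ (or $C^2$) on a neighborhood of $\theta_t$. It also requires that $p_t$ and $u_t$ are held fixed as $\eta_t\to0^+$, which must be stated explicitly, since in practice $\eta_t$ and $u_t$ are chosen together.
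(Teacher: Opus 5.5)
Your proposal is correct and follows essentially the same route as the paper's proof. It uses the same two-sided smoothness bound applied to $p_t$ and $u_t$ and the same index $i^\star$ attaining $v_t(u_t)$; once your constants are unpacked, you get the paper's step-size threshold $2v_t(u_t)/(\beta_{i^\star}\|u_t\|_2^2+\beta_{\max}\|p_t\|_2^2)$. The only difference is the final claim: the paper goes beyond the definition and derives, from the projection's variational inequality, that $\|u_t-d\|_{M_t}^2\ge\|u_t-p_t\|_{M_t}^2+\|p_t-d\|_{M_t}^2$ for all $d\in\mathcal C_t$; this is not needed for the statement as posed, where your appeal to the definition plus existence and uniqueness suffices.
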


For a conflicting optimizer proposal, $u_t$ incurs a positive first-order term
for at least one loss, whereas $p_t$ does not. GUA therefore yields the closest
conflict-free update to $u_t$ under the chosen metric. Full bounds and proofs
are provided in Appendix~\ref{app:proof_gua_local_guarantee}. Q3 tests the
resulting direction alignment mechanism using same-state, norm-matched
comparisons.

\paragraph{State alignment.}

Direction alignment resolves conflicts in the current optimizer proposal by
projecting it onto the conflict-free cone, while state alignment reduces the
persistence of projection-induced corrections in the optimizer memory. Without
state adjustment, components removed by the projection may still affect future
proposals through stored statistics. We therefore softly align the optimizer
state toward targets reconstructed from the applied update $p_t$ only when the
projection modifies the proposal. Otherwise, the tentative state is retained.
Taking Adam as an example, let
\(\bar s_{t+1}=(\bar m_{t+1},\bar v_{t+1})\) denote the tentative state associated
with \(u_t\). We reconstruct a target state from a pseudo-gradient consistent
with \(p_t\) under the current adaptive denominator, as detailed in
Appendix~\ref{app:optimizer_state_alignment}. The persistent state is then
softly aligned as
\begin{equation}
\begin{alignedat}{2}
m_{t+1}
&=
(1-\rho_m)\bar m_{t+1}
+\rho_m\tilde m_{t+1},
&\qquad
v_{t+1}
&=
(1-\rho_v)\bar v_{t+1}
+\rho_v\tilde v_{t+1}.
\end{alignedat}
\label{eq:state_alignment_generic}
\end{equation}
The coefficients $\rho_m,\rho_v\in[0,1]$ control the alignment strength.
Setting $\rho_m=\rho_v=0$ recovers direction alignment alone, while larger
values place greater weight on the target moments reconstructed from $p_t$.
The same principle can extend to other stateful optimizers, including RMSProp,
AdamW, and SOAP, using their respective state-update rules. The corresponding projection and state-alignment procedures, together with the cross-optimizer results, are provided in Appendix~\ref{app:gua_cross_optimizer}.

\begin{table*}[b]
\centering
\caption{Optimizer-induced GUM across PINN benchmarks. The sequence \(R_g \rightarrow R_a \rightarrow R_u\) tracks conflict rates ($\downarrow$) from raw loss-specific gradients through gradient surgery to optimizer transformation. Additional PDE results are provided in Appendix~\ref{app:additional_optimizer_gum}.}
\label{tab:optimizer_gum}
\scriptsize
\setlength{\tabcolsep}{1.35pt}
\renewcommand{\arraystretch}{1.05}
\begin{tabular}{c *{3}{w{c}{16pt}} @{\hspace{6pt}} *{3}{w{c}{16pt}} @{\hspace{6pt}} *{3}{w{c}{16pt}} @{\hspace{6pt}} *{3}{w{c}{16pt}} @{\hspace{6pt}} *{3}{w{c}{16pt}} @{\hspace{6pt}} *{3}{w{c}{16pt}}}
\toprule
\multicolumn{1}{c}{\multirow{3}{*}[-0.5ex]{\textbf{Optimizer}}}
& \multicolumn{6}{c}{\textbf{Schr\"odinger}}
& \multicolumn{6}{c}{\textbf{Burgers}}
& \multicolumn{6}{c}{\textbf{Heat-MS}} \\
\cmidrule(lr){2-7} \cmidrule(lr){8-13} \cmidrule(lr){14-19}
& \multicolumn{3}{c}{\textbf{2-loss}} & \multicolumn{3}{c}{\textbf{3-loss}} & \multicolumn{3}{c}{\textbf{2-loss}} & \multicolumn{3}{c}{\textbf{3-loss}} & \multicolumn{3}{c}{\textbf{2-loss}} & \multicolumn{3}{c}{\textbf{3-loss}} \\
\cmidrule(lr){2-4} \cmidrule(lr){5-7} \cmidrule(lr){8-10} \cmidrule(lr){11-13} \cmidrule(lr){14-16} \cmidrule(lr){17-19}
& $R_g$ & $R_a$ & $R_u$ & $R_g$ & $R_a$ & $R_u$ & $R_g$ & $R_a$ & $R_u$ & $R_g$ & $R_a$ & $R_u$ & $R_g$ & $R_a$ & $R_u$ & $R_g$ & $R_a$ & $R_u$ \\
\midrule
\rowcolor{gray!22}\multicolumn{19}{c}{\textbf{First-order optimizers}} \\
SGD & 100.0 & 0.0 & 0.0 & 100.0 & 0.0 & 0.0 & 99.7 & 0.0 & 0.0 & 99.9 & 0.0 & 0.0 & 70.1 & 0.0 & 0.0 & 99.0 & 0.0 & 0.0 \\
M-SGD & 99.8 & 0.0 & 82.3 & 99.8 & 0.0 & 81.6 & 99.9 & 0.0 & 80.5 & 100.0 & 0.0 & 86.3 & 58.6 & 0.0 & 51.3 & 95.5 & 0.0 & 72.7 \\
RMSProp & 78.7 & 0.0 & 1.8 & 83.3 & 0.0 & 18.8 & 58.4 & 0.0 & 6.1 & 87.1 & 0.0 & 24.7 & 27.7 & 0.0 & 0.0 & 72.8 & 0.0 & 5.5 \\
Adam & 40.8 & 0.0 & 23.7 & 56.3 & 0.0 & 35.2 & 30.5 & 0.0 & 29.4 & 73.7 & 0.0 & 57.4 & 22.8 & 0.0 & 12.8 & 81.3 & 0.0 & 64.6 \\
AdamW & 46.5 & 0.0 & 26.3 & 62.0 & 0.0 & 35.9 & 30.1 & 0.0 & 29.9 & 74.0 & 0.0 & 55.5 & 25.0 & 0.0 & 15.6 & 80.7 & 0.0 & 63.8 \\
\midrule
\rowcolor{gray!22}\multicolumn{19}{c}{\textbf{Second-order optimizers}} \\
SophiaG & 91.7 & 0.0 & 36.3 & 96.3 & 0.0 & 69.5 & 76.9 & 0.0 & 42.8 & 96.9 & 0.0 & 66.4 & 33.9 & 0.0 & 42.4 & 80.7 & 0.0 & 73.2 \\
AdaHessian & 99.7 & 0.0 & 27.2 & 99.7 & 0.0 & 16.9 & 99.4 & 0.0 & 43.9 & 99.9 & 0.0 & 70.9 & 75.2 & 0.0 & 40.9 & 99.8 & 0.0 & 57.1 \\
SOAP & 25.3 & 0.0 & 17.3 & 63.3 & 0.0 & 37.6 & 25.5 & 0.0 & 27.5 & 68.1 & 0.0 & 57.2 & 33.9 & 0.0 & 14.4 & 64.9 & 0.0 & 45.1 \\
\bottomrule
\end{tabular}
\end{table*}

\section{Experiments}
\label{sec:experiments}
\subsection{Experimental Setup}

We structure our experiments around the following research questions:
\textbf{Q1 (Optimizer-Induced GUM):} Does optimizer transformation induce conflicts after gradient surgery?
\textbf{Q2 (PINN Performance):} Does GUA achieve conflict-free applied updates and improve PINN performance?
\textbf{Q3 (Direction Alignment Mechanism):} Does GUA improve optimization primarily through direction alignment rather than changes in update magnitude?
\textbf{Q4 (State Alignment Mechanism):} Does optimizer-state alignment provide
additional benefits beyond direction alignment?
\textbf{Q5 (GUA Overhead):} What computational and memory overhead does GUA
introduce?
\textbf{Q6 (Task-Cardinality Scaling):} Does GUA remain effective as the number
of jointly optimized tasks increases?

For the PINN evaluation, we use six PDE benchmarks following
PINNacle~\cite{hao2024pinnacle} and ConFIG~\cite{liu2025config}, and evaluate
GUA with six representative gradient surgery methods:
PCGrad~\cite{yu2020gradient}, CAGrad~\cite{liu2021conflict},
IMTL-G~\cite{liu2021towards}, A-MTL~\cite{senushkin2023independent},
UPGrad~\cite{quinton2024jacobian}, and ConFIG~\cite{liu2025config}.
PINN results are reported over five independent runs on a single NVIDIA RTX 4090
GPU. We also use CelebA~\cite{liu2015deep} as a controlled task-cardinality
benchmark with varying numbers of jointly optimized tasks, reporting results
over three independent runs.
For the PINN benchmarks, let $\mathcal L_N$, $\mathcal L_B$, and
$\mathcal L_I$ denote the PDE residual, boundary, and initial-condition losses,
respectively. In the 2-loss setting, we introduce a composite loss
$\mathcal L_{BI}$ that aggregates the contributions from the boundary and
initial conditions, and evaluate $[\mathcal L_N,\mathcal L_{BI}]$.  We also evaluate the 3-loss setting
$[\mathcal L_N,\mathcal L_B,\mathcal L_I]$.
Detailed experimental settings are provided in Appendix~\ref{app:pinn_details} and
Appendix~\ref{app:mtl_details}.

\subsection{Q1: Optimizer-Induced GUM}

\textbf{Question Q1:} Does optimizer transformation induce conflicts after gradient surgery?
We use ConFIG as the base gradient surgery method because it constructs
conflict-free directions before optimizer transformation, yielding $R_a=0$.
This isolates the effect of optimizer transformation, since any subsequent
conflict in $u_t$ cannot be attributed to residual conflict in $a_t$. We then
measure the post-optimizer conflict rate $R_u$ across first-order and
second-order optimizer families to assess whether the conflict-free geometry is
preserved after optimizer transformation.

\textbf{Answer to Q1.}
Table~\ref{tab:optimizer_gum} reports the mean conflict rates along the three-stage transition
$R_g \rightarrow R_a \rightarrow R_u$, corresponding to the raw
loss-specific gradients $\{g_{i,t}\}_{i=1}^m$, constructed direction $a_t$,
and optimizer proposal $u_t$, respectively. ConFIG yields $R_a=0$ in all
settings, so nonzero $R_u$ directly indicates GUM.
Consistent with the conflict-preservation analysis in
Section~\ref{sec:conflict_preservation}, plain SGD preserves $R_u=0$.
In contrast, momentum alone produces substantial GUM, with M-SGD yielding
$R_u=51.3\%$--$86.3\%$. Adaptive and curvature-based optimizers can also produce
nonzero $R_u$, with the rate varying across PDEs. These results show that
optimizer transformation can reintroduce conflicts after gradient surgery,
depending on its interaction with the current loss-gradient geometry.

\subsection{Q2: PINN Performance}

\textbf{Question Q2:} Does GUA achieve conflict-free applied updates and improve PINN performance?
After establishing GUM, we assess whether GUA achieves conflict-free applied
updates and improves PINN performance. We evaluate GUA on six PDE benchmarks using representative gradient surgery methods, with Adam as a
no-surgery control that passes the original aggregate gradient directly to the
optimizer. Figure~\ref{fig:pinn_main_results} reports
the relative $L_2$ error.  Quantitative results and detailed conflict
transitions
$R_g\!\rightarrow R_a\!\rightarrow R_u\!\rightarrow R_p$
are provided in Appendix~\ref{app:full_pinn_results}.

\textbf{Answer to Q2.}
As shown in Appendix~\ref{app:full_pinn_results}, GUA eliminates update-level
conflicts, yielding \(R_p=0\) across all PINN settings. Figure~\ref{fig:pinn_main_results}
shows that GUA consistently improves PINN accuracy across gradient surgery methods
and PDE benchmarks. The mean relative \(L_2\) error decreases in every
setting, with individual reductions ranging from \(11.6\%\) to \(98.2\%\) and median
reductions of \(42.2\%\) and \(61.9\%\) in the two- and three-loss settings,
respectively. Notably, Adam+GUA, without gradient surgery, improves over Adam in all 10 PINN settings and outperforms all standalone gradient surgery baselines in 4 settings, highlighting the value of conflict handling after optimizer transformation. GUA also remains effective with RMSProp, AdamW, and SOAP, as shown in
Appendix~\ref{app:gua_cross_optimizer}.

\begin{figure*}[t]
\centering
\includegraphics[width=\textwidth]{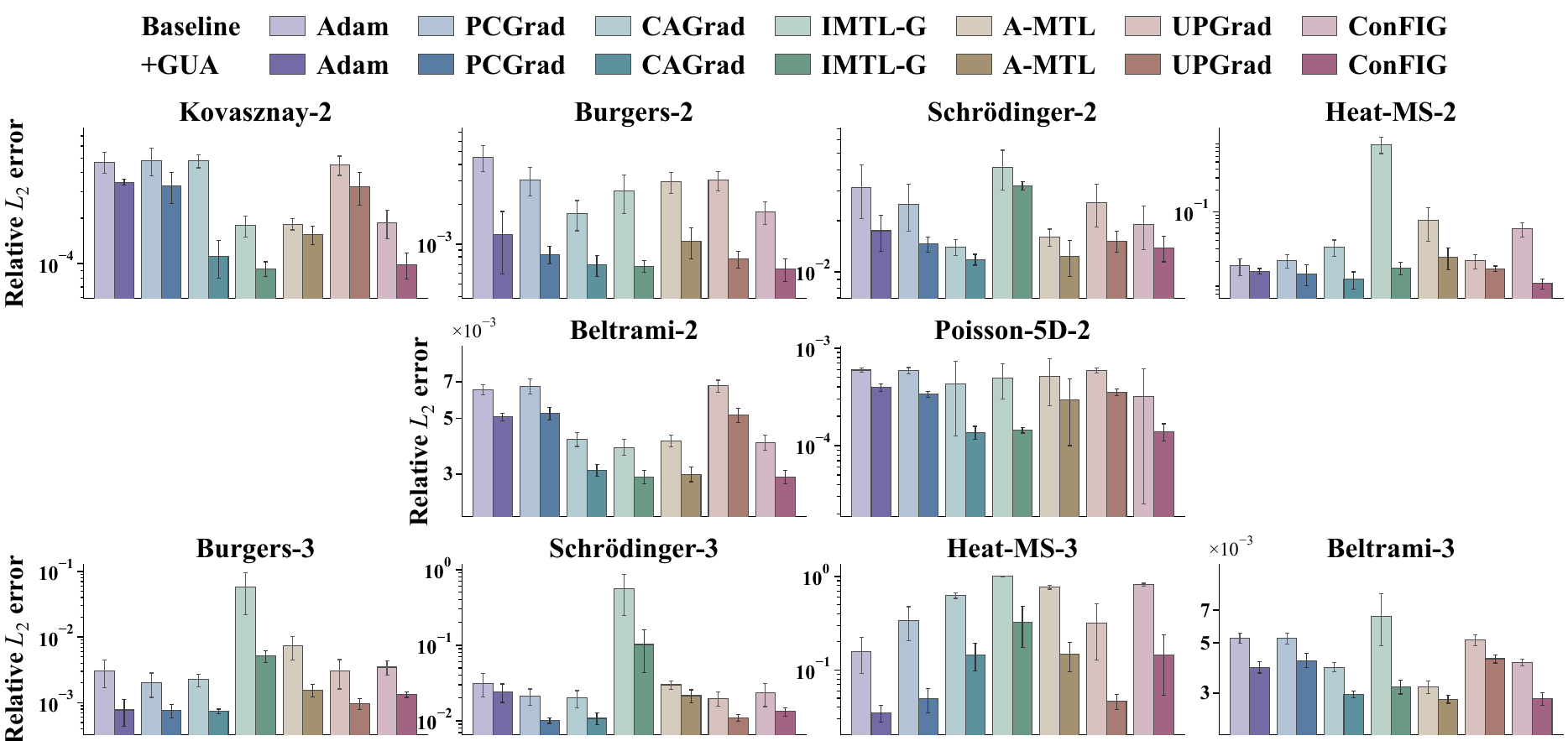}
\caption{Performance of different gradient surgery methods with and without GUA across
PINN benchmarks under 2-loss and 3-loss settings.}
\label{fig:q2_pinn_performance}
\label{fig:pinn_main_results}
\end{figure*}

\subsection{Q3: Direction  Alignment Mechanism}

\textbf{Question Q3:} Does GUA improve optimization primarily through
direction alignment rather than changes in update magnitude?
Direction alignment can change both the direction and norm of the optimizer proposal,
so performance gains alone cannot distinguish their effects. Using ConFIG as the
base gradient surgery method, we conduct a same-state, norm-matched comparison
across all 10 PINN settings. At each evaluated step, we fix the parameters,
optimizer state, batch, and learning rate, and compare $u_t$, $p_t$, and
$
q_t=\frac{\|p_t\|_2}{\|u_t\|_2}u_t.
$
Here, $q_t$ preserves the direction of $u_t$ while matching the norm of $p_t$,
thereby isolating the effect of direction alignment. We evaluate the worst
relative one-step loss change
$
W_t^{\mathrm{rel}}(d)
=
\max_i
\frac{
\mathcal L_i(\theta_t-\eta_t d)-\mathcal L_i(\theta_t)}
{|\mathcal L_i(\theta_t)|+\epsilon}.
$
The positive, step-fixed denominator rescales each loss without changing
$\mathcal C_t$, so the same local-safety argument applies to
$W_t^{\mathrm{rel}}$.

\textbf{Answer to Q3.}
As shown in Figure~\ref{fig:mechanism}, the benefit of GUA is primarily driven
by direction alignment. Across all 10 PINN settings, $p_t$ yields a lower
worst relative one-step loss change than $u_t$ in $95.1\%$ of paired same-state
comparisons and than the norm-matched $q_t$ in $93.4\%$ of cases
(see Appendix Table~\ref{tab:q3_equation_win_rates}). Since $q_t$ matches the norm
of $p_t$ while preserving the direction of $u_t$, the improvement over $q_t$
isolates the effect of direction alignment. These results empirically support the direction alignment mechanism
in Proposition~\ref{prop:gua_local_guarantee} and the norm-matching
analysis in Appendix~\ref{app:proof_norm_matching}.

\begin{figure*}[t]
\centering
\includegraphics[width=\textwidth]{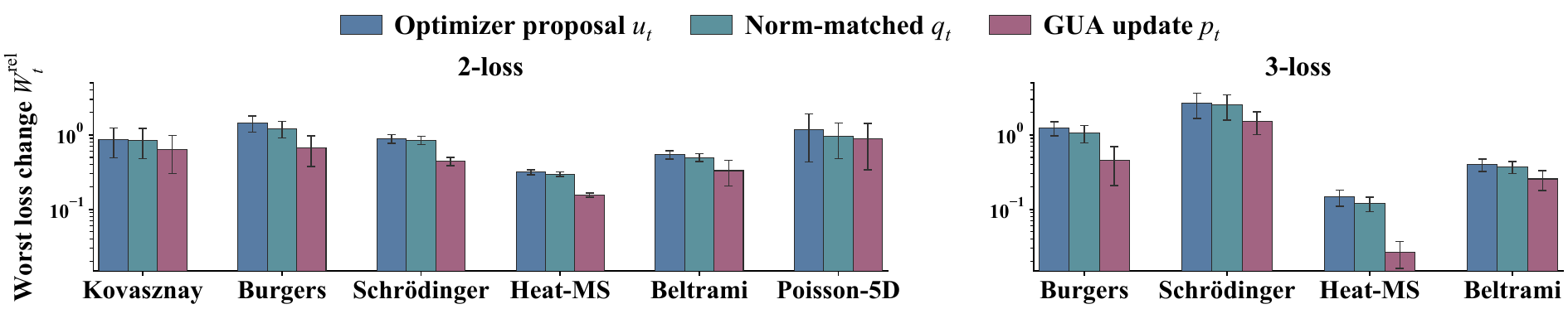}
\caption{Worst relative one-step loss change $W_t^{\mathrm{rel}}$ for the optimizer
proposal $u_t$, the GUA update $p_t$, and the norm-matched control $q_t$ under
paired same-state evaluations across PINN benchmarks. }
\label{fig:q3_mechanism}
\label{fig:mechanism}
\end{figure*}

\subsection{Q4: State Alignment Mechanism}
\textbf{Question Q4:} Does optimizer-state alignment provide additional
benefits beyond direction alignment?
Direction alignment acts on the current optimizer proposal $u_t$, but stateful
optimizers retain internal statistics that influence future proposals. We
therefore examine whether adjusting the optimizer state toward targets
reconstructed from the applied update $p_t$ provides additional benefits beyond
direction alignment alone. Using ConFIG as the base gradient surgery method
with Adam as the optimizer, we compare the gradient surgery baseline without
GUA, direction alignment only $(\rho_m,\rho_v)=(0,0)$, soft state alignment
(see Appendix~\ref{app:optimizer_state_alignment}),
exact first-moment alignment $(\rho_m,\rho_v)=(1,0)$, and exact alignment of
both Adam moments $(\rho_m,\rho_v)=(1,1)$.

\textbf{Answer to Q4.}
Figure~\ref{fig:state_alignment} shows that direction alignment alone already improves performance, while soft state alignment provides further improvements in most settings. In contrast, exact alignment is less effective,
and aligning both Adam moments exactly degrades performance in some settings
and becomes numerically unstable in one Heat-MS setting.
A possible explanation is that direction alignment affects only the current
optimizer proposal, whereas the moments maintained by Adam encode information
accumulated over a longer optimization horizon.
Fully replacing them with states reconstructed from $p_t$ can discard useful
history and alter future adaptive scaling. Soft alignment instead preserves
part of this history while moving the optimizer state toward target moments
reconstructed from $p_t$, motivating its use in GUA.
\begin{figure*}[h]
\centering
\includegraphics[width=\textwidth]{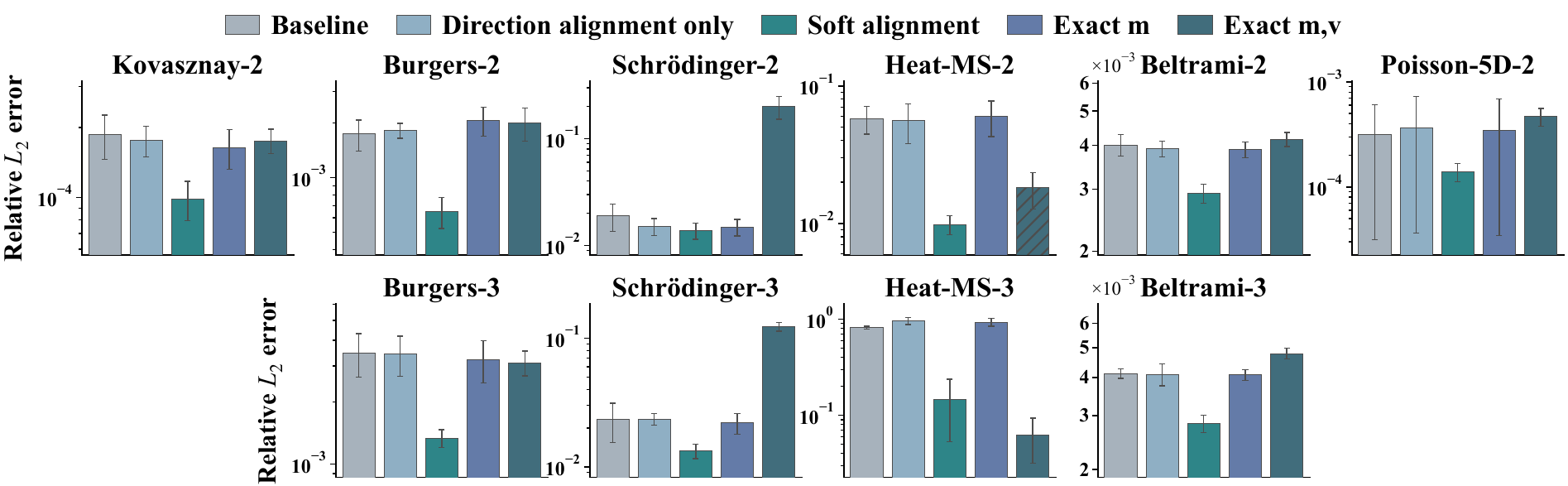}
\caption{State-alignment ablation on PINN benchmarks.}
\label{fig:state_alignment}
\end{figure*}

\subsection{Q5: GUA Overhead}

\textbf{Question Q5:} What computational and memory overhead does GUA
introduce?
We evaluate the practical overhead of GUA by comparing training time and peak
GPU memory with and without GUA across all PINN settings, using ConFIG as the
base method. GUA performs additional conflict checking and direction alignment
at each step. The projection is solved in the $m$-dimensional dual without
forming a $p\times p$ matrix. Its parameter-dimensional cost is $O(m^2p)$ for
Gram-matrix construction, followed by a small dual solve. Since
$m\in\{2,3\}$ in the PINN experiments, this dual problem is solved exactly by
active-set enumeration. Further details are provided in
Appendix~\ref{app:projection_solver}.

\textbf{Answer to Q5.}
As shown in Figure~\ref{fig:overhead}, GUA introduces additional training-time
overhead while adding little peak GPU memory overhead. Across all PINN
settings, GUA consistently improves accuracy. These results highlight a trade-off
between accuracy and efficiency. GUA improves
accuracy at the cost of additional computation while incurring little
additional memory usage.

\begin{figure*}[t]
\centering
\includegraphics[width=\textwidth]{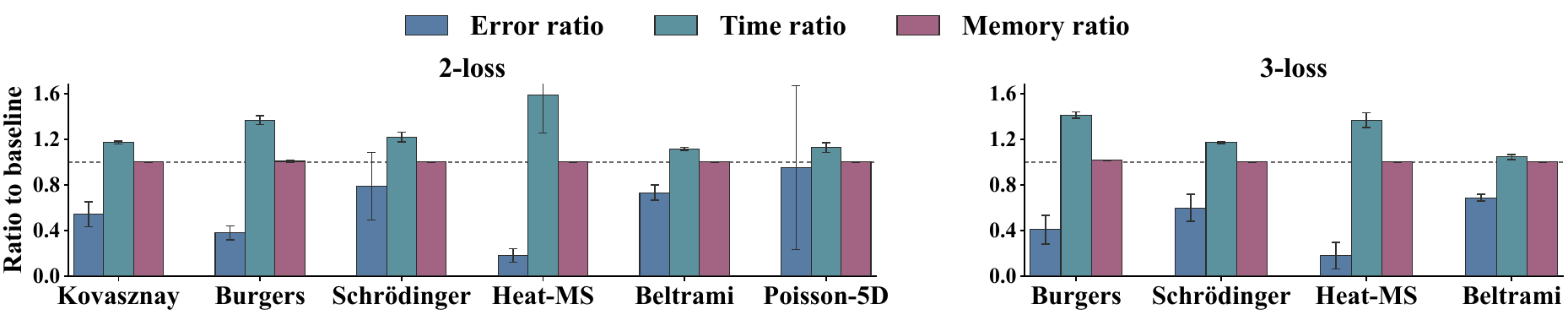}
\caption{Relative $L_2$ error ($\downarrow$), training-time ($\downarrow$), and peak-memory ($\downarrow$) ratios across PINN settings.}
\label{fig:overhead}
\end{figure*}

\subsection{Q6: Task-Cardinality Scaling}

\textbf{Question Q6:} Does GUA remain effective as the number of jointly
optimized tasks increases?
We use CelebA~\cite{liu2015deep} as a controlled task-cardinality benchmark,
varying the number of attribute tasks from 2 to 40 while fixing ConFIG as the
baseline. For each task count, we compare ConFIG and
ConFIG+GUA in terms of  $\overline{F_1}$, training time, and peak GPU
memory. The projection uses exact active-set enumeration for small $m$ and an $m$-dimensional nonnegative dual QP for larger $m$. Further details are provided in
Appendix~\ref{app:projection_solver}.

\begin{figure*}[t]
\centering
\includegraphics[width=\textwidth]{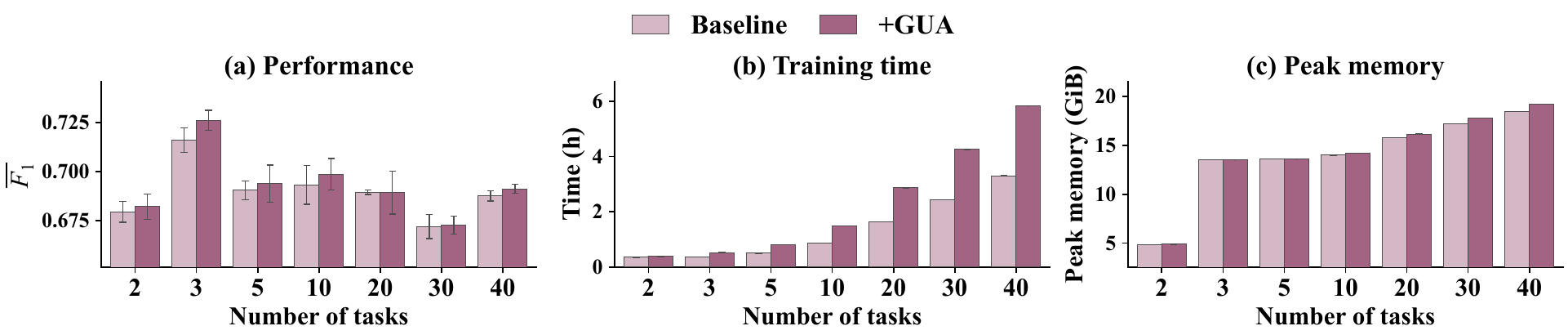}
\caption{Task scaling of GUA in terms
of $\overline{F_1}$ ($\uparrow$), training time ($\downarrow$), and peak GPU memory ($\downarrow$).}
\label{fig:celeba_task_scaling}
\end{figure*}

\textbf{Answer to Q6.}
Figure~\ref{fig:celeba_task_scaling} shows that GUA remains effective as task
cardinality increases from 2 to 40, improving the $\overline{F_1}
$ in six of
seven settings and achieving comparable performance at 20 tasks. The additional
memory cost remains small as task cardinality increases, reaching only 4.2\% at
40 tasks, while training-time overhead becomes more pronounced for larger task
counts. These results demonstrate that GUA remains effective as the
number of jointly optimized tasks increases.

\section{Conclusion}

This work provides a new perspective on gradient conflict handling in PINN training.
We identify the largely overlooked phenomenon of GUM and demonstrate its prevalence
across evaluated PINN settings through theoretical analysis and extensive experiments.
Motivated by this finding, we propose GUA to enforce conflict-free geometry at the
update level. Extensive experiments show that GUA effectively eliminates update-level
conflicts and substantially improves PINN performance.
\paragraph{Limitations and Future Work.}
Despite these advances, the current formulation of GUM and GUA is based on local first-order gradient geometry and therefore does not explicitly capture curvature, finite-step, or longer-horizon optimization effects. Moreover, the current hard conflict-free criterion may be conservative by excluding update directions that could become beneficial beyond the local first-order view. Therefore, extending the framework beyond instantaneous first-order geometry, together with developing softer conflict criteria, is an important direction for future work.

\bibliography{references}
\bibliographystyle{arxiv_preprint}
\clearpage
\appendix
\section{Theory and Proofs}
\label{app:theory_proofs}

% =========================================================
\subsection{Proof of Proposition~\ref{prop:conflict_preserving}}
\label{app:proof_conflict_preserving}

\paragraph{Proposition~\ref{prop:conflict_preserving}.}
Let \(G_t=[g_{1,t},\ldots,g_{m,t}]\) and
\[
\mathcal C_t
=
\{d\in\mathbb R^p\mid G_t^\top d\ge0\}.
\]
The affine transformation \(T_t(a)=P_ta+b_t\) is conflict-preserving on
\(\mathcal C_t\), i.e.,
\[
a\in\mathcal C_t
\quad\Longrightarrow\quad
T_t(a)\in\mathcal C_t,
\]
if and only if
\[
b_t\in\mathcal C_t,
\qquad
P_t^\top g_{i,t}
\in
\operatorname{cone}(g_{1,t},\ldots,g_{m,t}),
\quad
i=1,\ldots,m.
\]

\begin{proof}
We first prove necessity. Since \(\mathcal C_t\) is a cone,
\(0\in\mathcal C_t\). If \(T_t\) is conflict-preserving, then
\[
T_t(0)=b_t\in\mathcal C_t.
\]

Now take any \(a\in\mathcal C_t\) and any scalar \(\lambda\ge0\).
Since \(\lambda a\in\mathcal C_t\), conflict preservation gives
\[
T_t(\lambda a)
=
\lambda P_ta+b_t
\in
\mathcal C_t.
\]
Therefore, for every \(i\),
\[
\lambda\langle g_{i,t},P_ta\rangle
+
\langle g_{i,t},b_t\rangle
\ge0
\qquad
\text{for all }\lambda\ge0.
\]
Letting \(\lambda\to\infty\) implies
\[
\langle g_{i,t},P_ta\rangle\ge0
\qquad
\text{for all }a\in\mathcal C_t.
\]
Equivalently,
\[
\langle P_t^\top g_{i,t},a\rangle\ge0
\qquad
\text{for all }a\in\mathcal C_t.
\]

Define the dual cone of \(\mathcal C_t\) as
\[
\mathcal C_t^\ast
=
\left\{
y\in\mathbb R^p
\;\middle|\;
\langle y,d\rangle\ge0
\text{ for all }d\in\mathcal C_t
\right\}.
\]
The preceding inequality shows that
\[
P_t^\top g_{i,t}\in\mathcal C_t^\ast.
\]
Since
\[
\mathcal C_t
=
\{d\in\mathbb R^p\mid G_t^\top d\ge0\}
\]
is a closed polyhedral cone, Farkas' lemma gives
\[
\mathcal C_t^\ast
=
\operatorname{cone}(g_{1,t},\ldots,g_{m,t}).
\]
Hence,
\[
P_t^\top g_{i,t}
\in
\operatorname{cone}(g_{1,t},\ldots,g_{m,t}),
\qquad
i=1,\ldots,m.
\]

We now prove sufficiency. Suppose
\[
b_t\in\mathcal C_t
\]
and
\[
P_t^\top g_{i,t}
\in
\operatorname{cone}(g_{1,t},\ldots,g_{m,t})
\qquad
\text{for all }i.
\]
For any \(a\in\mathcal C_t\),
\[
\langle g_{i,t},P_ta\rangle
=
\langle P_t^\top g_{i,t},a\rangle
\ge0,
\]
because \(P_t^\top g_{i,t}\) is a nonnegative combination of
\(g_{1,t},\ldots,g_{m,t}\), while
\[
\langle g_{j,t},a\rangle\ge0
\qquad
\text{for every }j.
\]
Thus,
\[
P_ta\in\mathcal C_t.
\]
Since \(b_t\in\mathcal C_t\) and \(\mathcal C_t\) is a convex cone, it is
closed under addition. Therefore,
\[
P_ta+b_t\in\mathcal C_t.
\]
Hence,
\[
T_t(a)=P_ta+b_t\in\mathcal C_t
\qquad
\text{for every }a\in\mathcal C_t,
\]
so \(T_t\) is conflict-preserving.
\end{proof}

% =========================================================
\subsection{General Nonlinear Conflict Preservation}
\label{app:nonlinear_preservation}

The fixed-state affine result above provides an interpretable characterization,
but exact optimizer maps need not be affine in the constructed direction. We
therefore give a general characterization for nonlinear optimizer
transformations.

\begin{proposition}[General nonlinear conflict preservation]
\label{prop:nonlinear_conflict_preserving}
Fix \(G_t=[g_{1,t},\ldots,g_{m,t}]\) and all step-\(t\) quantities other
than the input direction \(a\), and let
\[
T_t:\mathbb R^p\rightarrow\mathbb R^p
\]
denote the resulting one-step optimizer transformation. Suppose \(T_t\) is
locally Lipschitz. For each \(a\in\mathcal C_t\), define the ray-wise map
\[
\phi_{t,a}(\tau)
=
T_t(\tau a),
\qquad
\tau\in[0,1].
\]
Then \(T_t\) is conflict-preserving on \(\mathcal C_t\) if and only if,
for every \(a\in\mathcal C_t\) and every \(i=1,\ldots,m\),
\[
\langle g_{i,t},T_t(0)\rangle
+
\int_0^1
\left\langle
g_{i,t},
\phi_{t,a}'(\tau)
\right\rangle
\,d\tau
\ge0,
\]
where \(\phi_{t,a}'(\tau)\) exists almost everywhere.

If \(T_t\) is continuously differentiable, this condition becomes
\[
\langle g_{i,t},T_t(0)\rangle
+
\int_0^1
\left\langle
J_{T_t}(\tau a)^\top g_{i,t},
a
\right\rangle
\,d\tau
\ge0,
\]
where \(J_{T_t}\) denotes the Jacobian of \(T_t\).
\end{proposition}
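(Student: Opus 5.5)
The plan is to reduce the claim to the fundamental theorem of calculus for absolutely continuous functions, applied along the ray from $0$ to $a$. Since $T_t$ is locally Lipschitz and $\{\tau a:\tau\in[0,1]\}$ is compact, $T_t$ is Lipschitz on a neighborhood of this segment. Hence $\phi_{t,a}$ is Lipschitz on $[0,1]$, and therefore absolutely continuous. By Rademacher's theorem in one dimension, $\phi_{t,a}'(\tau)$ exists for almost every $\tau$ and is essentially bounded. This gives
\[
T_t(a)=\phi_{t,a}(1)=\phi_{t,a}(0)+\int_0^1\phi_{t,a}'(\tau)\,d\tau
=T_t(0)+\int_0^1\phi_{t,a}'(\tau)\,d\tau .
\]
Taking the inner product with $g_{i,t}$, and exchanging the inner product with the Bochner/componentwise integral (which is linear and bounded), we obtain the identity
\[
\langle g_{i,t},T_t(a)\rangle=\langle g_{i,t},T_t(0)\rangle+\int_0^1\langle g_{i,t},\phi_{t,a}'(\tau)\rangle\,d\tau .
\]

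With this identity, both directions of the equivalence are immediate. Conflict preservation on $\mathcal C_t$ means $\langle g_{i,t},T_t(a)\rangle\ge0$ for all $a\in\mathcal C_t$ and all $i$, which by the definition of $\mathcal C_t$ in \eqref{def:conflict} is exactly $T_t(a)\in\mathcal C_t$. The right-hand side of the identity is precisely the expression in the stated condition, so the condition holds for every $a\in\mathcal C_t$ and every $i$ if and only if $T_t$ is conflict-preserving. No cone structure of $\mathcal C_t$ is needed beyond $a$ ranging over it. One could note, as in the proof of Proposition~\ref{prop:conflict_preserving}, that taking $a=0\in\mathcal C_t$ recovers $T_t(0)\in\mathcal C_t$ as a necessary condition, but this is not needed for the equivalence.

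For the $C^1$ case, the chain rule gives $\phi_{t,a}'(\tau)=J_{T_t}(\tau a)\,a$ for every $\tau$. Then
\[
\langle g_{i,t},J_{T_t}(\tau a)a\rangle=\langle J_{T_t}(\tau a)^\top g_{i,t},a\rangle,
\]
and substituting this into the integral yields the second displayed condition. Here the integrand is continuous, so the integral is an ordinary Riemann integral.

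The only step that needs care is the justification of the fundamental theorem of calculus under mere local Lipschitz continuity. This requires three things: uniform Lipschitz continuity on the compact segment via a finite subcover, absolute continuity of each coordinate of $\phi_{t,a}$, and a.e. differentiability with an integrable derivative. I would handle all three coordinatewise using the scalar Lebesgue theory. As a sanity check, for affine $T_t(a)=P_ta+b_t$ the condition reads $\langle g_{i,t},b_t\rangle+\langle P_t^\top g_{i,t},a\rangle\ge0$, and the homogeneity argument of Proposition~\ref{prop:conflict_preserving} (scaling $a$ by $\lambda$) then recovers the affine characterization as a special case.
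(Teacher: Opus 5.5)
Your proposal is correct and follows the paper's proof essentially step for step. You use absolute continuity of the ray map $\phi_{t,a}$ from local Lipschitzness, the fundamental theorem of calculus to rewrite $\langle g_{i,t},T_t(a)\rangle$ as the displayed expression (so the condition is literally $T_t(\mathcal C_t)\subseteq\mathcal C_t$), and the chain rule $\phi_{t,a}'(\tau)=J_{T_t}(\tau a)a$ for the $C^1$ case, with your extra care about Lipschitz continuity on the compact segment and exchanging the inner product with the integral filling in details the paper leaves implicit.
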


\begin{proof}
For any fixed \(a\in\mathcal C_t\), the locally Lipschitz property of
\(T_t\) implies that the ray-wise map
\[
\phi_{t,a}(\tau)=T_t(\tau a)
\]
is absolutely continuous on \([0,1]\). Hence it is differentiable almost
everywhere, and the fundamental theorem for absolutely continuous functions
gives
\[
T_t(a)
=
T_t(0)
+
\int_0^1
\phi_{t,a}'(\tau)
\,d\tau.
\]
Taking the inner product with \(g_{i,t}\) yields
\[
\langle g_{i,t},T_t(a)\rangle
=
\langle g_{i,t},T_t(0)\rangle
+
\int_0^1
\left\langle
g_{i,t},
\phi_{t,a}'(\tau)
\right\rangle
\,d\tau.
\]
By definition,
\[
T_t(a)\in\mathcal C_t
\quad\Longleftrightarrow\quad
\langle g_{i,t},T_t(a)\rangle\ge0
\quad
\text{for all }i.
\]
Therefore, requiring the displayed inequality for every
\(a\in\mathcal C_t\) and every \(i\) is equivalent to
\[
T_t(\mathcal C_t)\subseteq\mathcal C_t.
\]

If \(T_t\) is continuously differentiable, the chain rule gives
\[
\phi_{t,a}'(\tau)
=
J_{T_t}(\tau a)a,
\]
which yields the stated Jacobian form.
\end{proof}

\paragraph{Relation to the affine case.}
For an affine transformation
\[
T_t(a)=P_ta+b_t,
\]
we have
\[
T_t(0)=b_t,
\qquad
J_{T_t}(a)=P_t.
\]
Thus, the nonlinear characterization reduces to
\[
\langle g_{i,t},b_t\rangle
+
\langle P_t^\top g_{i,t},a\rangle
\ge0
\qquad
\text{for every }a\in\mathcal C_t.
\]
Proposition~\ref{prop:conflict_preserving} gives the corresponding
interpretable necessary-and-sufficient conditions
\[
b_t\in\mathcal C_t,
\qquad
P_t^\top g_{i,t}
\in
\operatorname{cone}(g_{1,t},\ldots,g_{m,t}).
\]
Hence Proposition~\ref{prop:conflict_preserving} is the affine specialization
of the general nonlinear characterization.

The locally Lipschitz formulation also covers piecewise-smooth optimizer
transformations: the ray-wise derivative is interpreted almost everywhere,
so nondifferentiabilities introduced, for example, by clipping do not affect
the characterization.

% =========================================================
\subsection{Proof of Proposition~\ref{prop:gua_local_guarantee}}
\label{app:proof_gua_local_guarantee}

\paragraph{Proposition~\ref{prop:gua_local_guarantee}.}
Suppose each loss \(\mathcal L_i\) is \(\beta_i\)-smooth, with
\(\beta_i>0\), on a convex neighborhood of \(\theta_t\), and suppose the
optimizer proposal is conflicting, i.e.,
\[
u_t\notin\mathcal C_t.
\]
Let
\[
\delta_t
=
v_t(u_t)
=
\max_i[-\langle g_{i,t},u_t\rangle]_+
>0
\]
and choose
\[
i^\star
\in
\arg\max_i
[-\langle g_{i,t},u_t\rangle]_+.
\]
Let
\[
p_t
=
\Pi_{\mathcal C_t}^{M_t}(u_t)
=
\arg\min_{d\in\mathcal C_t}
\frac12\|d-u_t\|_{M_t}^2.
\]
Then, for every \(\eta_t>0\) such that
\(\theta_t-\eta_tu_t\) and \(\theta_t-\eta_tp_t\) remain in this
neighborhood,
\[
\max_i\Delta_{i,t}(p_t;\eta_t)
\leq
\frac{\beta_{\max}\eta_t^2}{2}\|p_t\|_2^2,
\]
and
\[
\Delta_{i^\star,t}(u_t;\eta_t)
\geq
\eta_t\delta_t
-
\frac{\beta_{i^\star}\eta_t^2}{2}\|u_t\|_2^2,
\]
where
\[
\beta_{\max}
=
\max_i\beta_i.
\]
Consequently,
\[
\max_i\Delta_{i,t}(p_t;\eta_t)
<
\max_i\Delta_{i,t}(u_t;\eta_t)
\]
for all sufficiently small \(\eta_t>0\). Moreover, for every
\(d\in\mathcal C_t\),
\[
\|u_t-d\|_{M_t}^2
\geq
\|u_t-p_t\|_{M_t}^2
+
\|p_t-d\|_{M_t}^2.
\]

\begin{proof}
Because \(M_t\succ0\) and \(\mathcal C_t\) is a nonempty closed convex cone,
the metric projection
\[
p_t
=
\arg\min_{d\in\mathcal C_t}
\frac12\|d-u_t\|_{M_t}^2
\]
exists and is unique. For convenience, define
\[
W_t(d;\eta_t)
=
\max_{1\leq i\leq m}
\Delta_{i,t}(d;\eta_t).
\]

\paragraph{First-order safety.}
By the definition of \(\mathcal C_t\),
\[
p_t\in\mathcal C_t
\quad\Longrightarrow\quad
\langle g_{i,t},p_t\rangle\geq0,
\qquad
i=1,\ldots,m.
\]
Since \(\mathcal L_i\) is \(\beta_i\)-smooth, for any direction \(d\) and
any admissible step size \(\eta_t>0\),
\[
\left|
\mathcal L_i(\theta_t-\eta_td)
-
\mathcal L_i(\theta_t)
+
\eta_t\langle g_{i,t},d\rangle
\right|
\leq
\frac{\beta_i\eta_t^2}{2}\|d\|_2^2.
\]
Applying the upper bound with \(d=p_t\) gives
\[
\Delta_{i,t}(p_t;\eta_t)
\leq
-\eta_t\langle g_{i,t},p_t\rangle
+
\frac{\beta_i\eta_t^2}{2}\|p_t\|_2^2.
\]
Because
\[
\langle g_{i,t},p_t\rangle\geq0,
\]
we obtain
\[
\Delta_{i,t}(p_t;\eta_t)
\leq
\frac{\beta_i\eta_t^2}{2}\|p_t\|_2^2.
\]
Therefore,
\[
W_t(p_t;\eta_t)
=
\max_i\Delta_{i,t}(p_t;\eta_t)
\leq
\frac{\beta_{\max}\eta_t^2}{2}\|p_t\|_2^2.
\]
Thus, the projected update contains no positive first-order loss-change term.

\paragraph{One-step improvement for conflicting optimizer proposals.}
Since
\[
u_t\notin\mathcal C_t,
\]
we have
\[
\delta_t
=
v_t(u_t)
=
\max_i[-\langle g_{i,t},u_t\rangle]_+
>0.
\]
Choose
\[
i^\star
\in
\arg\max_i[-\langle g_{i,t},u_t\rangle]_+.
\]
Then
\[
-\langle g_{i^\star,t},u_t\rangle
=
\delta_t,
\]
and therefore
\[
\langle g_{i^\star,t},u_t\rangle
=
-\delta_t.
\]

Using the lower smoothness bound with \(d=u_t\) yields
\begin{align*}
\Delta_{i^\star,t}(u_t;\eta_t)
&\geq
-\eta_t\langle g_{i^\star,t},u_t\rangle
-
\frac{\beta_{i^\star}\eta_t^2}{2}\|u_t\|_2^2
\\
&=
\eta_t\delta_t
-
\frac{\beta_{i^\star}\eta_t^2}{2}\|u_t\|_2^2.
\end{align*}
Since \(W_t(u_t;\eta_t)\) is the maximum loss change,
\[
W_t(u_t;\eta_t)
\geq
\Delta_{i^\star,t}(u_t;\eta_t),
\]
and hence
\[
W_t(u_t;\eta_t)
\geq
\eta_t\delta_t
-
\frac{\beta_{i^\star}\eta_t^2}{2}\|u_t\|_2^2.
\]

Combining the bounds for \(p_t\) and \(u_t\), we obtain
\[
W_t(p_t;\eta_t)
<
W_t(u_t;\eta_t)
\]
whenever
\[
\frac{\beta_{\max}\eta_t^2}{2}\|p_t\|_2^2
<
\eta_t\delta_t
-
\frac{\beta_{i^\star}\eta_t^2}{2}\|u_t\|_2^2.
\]
For \(\eta_t>0\), this is equivalent to
\[
0<\eta_t<
\frac{2\delta_t}
{
\beta_{i^\star}\|u_t\|_2^2
+
\beta_{\max}\|p_t\|_2^2
}.
\]
Therefore, whenever the two trial points remain in the assumed neighborhood,
\[
W_t(p_t;\eta_t)
<
W_t(u_t;\eta_t)
\]
for every
\[
0<\eta_t<
\frac{2\delta_t}
{
\beta_{i^\star}\|u_t\|_2^2
+
\beta_{\max}\|p_t\|_2^2
}.
\]
In particular, the strict inequality holds for all sufficiently small
admissible step sizes.

\paragraph{Least-change correction.}
The first-order optimality condition of the metric projection is
\[
\langle M_t(p_t-u_t),d-p_t\rangle
\geq0,
\qquad
\forall d\in\mathcal C_t.
\]
Define the \(M_t\)-inner product as
\[
\langle x,y\rangle_{M_t}
=
x^\top M_ty.
\]
The optimality condition is equivalently
\[
\langle u_t-p_t,d-p_t\rangle_{M_t}
\leq0,
\]
and hence
\[
\langle u_t-p_t,p_t-d\rangle_{M_t}
\geq0.
\]

For any \(d\in\mathcal C_t\), expanding the squared distance gives
\begin{align*}
\|u_t-d\|_{M_t}^2
&=
\|u_t-p_t\|_{M_t}^2
+
\|p_t-d\|_{M_t}^2
+
2\langle u_t-p_t,p_t-d\rangle_{M_t}
\\
&\geq
\|u_t-p_t\|_{M_t}^2
+
\|p_t-d\|_{M_t}^2.
\end{align*}
This proves the metric projection inequality.

In particular, if
\[
a_t\in\mathcal C_t,
\]
then substituting \(d=a_t\) gives
\[
\|u_t-a_t\|_{M_t}^2
\geq
\|u_t-p_t\|_{M_t}^2
+
\|p_t-a_t\|_{M_t}^2,
\]
or equivalently,
\[
\|p_t-a_t\|_{M_t}^2
\leq
\|u_t-a_t\|_{M_t}^2
-
\|u_t-p_t\|_{M_t}^2.
\]
Thus, GUA restores feasibility through the smallest feasible correction under
the \(M_t\)-metric. More generally, the projection inequality shows that
\(p_t\) is closer than \(u_t\) to every feasible direction
\(d\in\mathcal C_t\), including the original constructed direction \(a_t\)
whenever \(a_t\in\mathcal C_t\).
\end{proof}

If \(\langle g_{i,t},p_t\rangle>0\) for every \(i\), the same smoothness
bounds further imply simultaneous decrease of all losses for sufficiently
small step sizes. When \(p_t\) lies on the boundary of \(\mathcal C_t\),
however, some inner products may be zero, and
Proposition~\ref{prop:gua_local_guarantee} guarantees only the absence of a
positive first-order loss-change term for the corresponding losses.

% =========================================================
\subsection{Norm Matching Does Not Restore Conflict-Freeness}
\label{app:proof_norm_matching}

The following observation provides the theoretical basis for the norm-matched
control used in Q3 and Figure~\ref{fig:mechanism}.

\paragraph{Norm-matching observation.}
Suppose
\[
u_t\notin\mathcal C_t
\]
and let
\[
q_t=\alpha_tu_t
\]
for any \(\alpha_t>0\). Then
\[
v_t(q_t)
=
\alpha_tv_t(u_t)
>0.
\]
Thus, positive rescaling cannot restore conflict-freeness. In particular,
when \(p_t\neq0\), the norm-matched control
\[
q_t
=
\frac{\|p_t\|_2}{\|u_t\|_2}u_t
\]
satisfies
\[
\|q_t\|_2=\|p_t\|_2
\]
but remains outside \(\mathcal C_t\).

Indeed, since
\[
u_t\notin\mathcal C_t,
\]
we have
\[
v_t(u_t)>0.
\]
For any \(\alpha_t>0\),
\begin{align*}
v_t(\alpha_tu_t)
&=
\max_i[-\langle g_{i,t},\alpha_tu_t\rangle]_+
\\
&=
\alpha_t
\max_i[-\langle g_{i,t},u_t\rangle]_+
\\
&=
\alpha_tv_t(u_t)
>0.
\end{align*}
Hence
\[
\alpha_tu_t\notin\mathcal C_t
\qquad
\text{for every }\alpha_t>0.
\]
When \(p_t\neq0\), the norm-matching factor
\[
\alpha_t
=
\frac{\|p_t\|_2}{\|u_t\|_2}
\]
is strictly positive, because \(u_t\notin\mathcal C_t\) implies
\(u_t\neq0\). Therefore,
\[
\|q_t\|_2=\|p_t\|_2,
\qquad
v_t(q_t)
=
\frac{\|p_t\|_2}{\|u_t\|_2}
v_t(u_t)
>0.
\]
Thus,
\[
q_t\notin\mathcal C_t,
\]
even though \(q_t\) and \(p_t\) have the same Euclidean norm. Hence, norm matching alone cannot remove GUM, and changing the update direction is necessary to restore conflict-freeness.

\newpage
\section{Experimental Details for PINNs}
\label{app:pinn_details}

\subsection{PDE Benchmarks}
\label{app:pde_benchmarks}

We evaluate GUA on standard PDE benchmarks spanning steady and time-dependent PDEs, scalar and multi-component systems, low- and high-dimensional domains, and incompressible flows. The benchmark protocols
are mainly based on PINNacle~\cite{hao2024pinnacle} and
ConFIG~\cite{liu2025config}.

\paragraph{Schr\"odinger.}
The Schr\"odinger benchmark solves a nonlinear Schr\"odinger equation by representing the complex solution as $h=u+iv$. The residuals are
\begin{align*}
u_t + \frac{1}{2}v_{xx} + (u^2+v^2)v &= 0,\\
v_t - \frac{1}{2}u_{xx} - (u^2+v^2)u &= 0,
\end{align*}
on $(x,t)\in[-5,5]\times[0,\pi/2]$. The initial condition is
\begin{equation*}
u(x,0)=\frac{2}{\cosh(x)},\qquad v(x,0)=0.
\end{equation*}
The equation follows periodic boundary conditions at $x=-5$ and $x=5$.

\paragraph{Burgers.}
The Burgers benchmark solves the one-dimensional viscous Burgers equation on $(x,t)\in[-1,1]\times[0,1]$:
\begin{equation*}
u_t + u u_x - \nu u_{xx}=0,
\qquad
\nu=\frac{0.01}{\pi}.
\end{equation*}
The initial condition is
\begin{equation*}
u(x,0)=-\sin(\pi x),
\end{equation*}
and homogeneous Dirichlet boundary conditions are imposed:
\begin{equation*}
u(-1,t)=u(1,t)=0.
\end{equation*}

\paragraph{Kovasznay flow.}
The Kovasznay benchmark solves a steady two-dimensional incompressible-flow problem on
$\Omega=[-0.5,1]\times[-0.5,1.5]$:
\begin{align*}
u u_x + v u_y + p_x - \nu (u_{xx}+u_{yy}) &= 0,\\
u v_x + v v_y + p_y - \nu (v_{xx}+v_{yy}) &= 0,\\
u_x+v_y &= 0.
\end{align*}
The Reynolds number is $\mathrm{Re}=40$, so $\nu=1/\mathrm{Re}$. The analytical solution is
\begin{align*}
u^\star(x,y) &= 1-\exp(\lambda x)\cos(2\pi y),\\
v^\star(x,y) &= \frac{\lambda}{2\pi}\exp(\lambda x)\sin(2\pi y),\\
p^\star(x,y) &= \frac{1}{2}\left(1-\exp(2\lambda x)\right),
\end{align*}
where
\begin{equation*}
\lambda=\frac{1}{2\nu}-\sqrt{\frac{1}{4\nu^2}+4\pi^2}.
\end{equation*}
Dirichlet boundary conditions are given by the analytical solution on $\partial\Omega$. The same analytical solution is used to generate validation and test data.

\paragraph{Heat-MS.}
The Heat-MS reference data follow a two-dimensional multiscale heat solution on $(x,y,t)\in[0,1]\times[0,1]\times[0,5]$. The analytical reference corresponds to
\begin{equation*}
u_t - c_x u_{xx} - c_y u_{yy}=0,
\end{equation*}
with
\begin{equation*}
c_x=\frac{1}{(500\pi)^2}, \qquad c_y=\frac{1}{\pi^2}.
\end{equation*}
The initial condition is
\begin{equation*}
u(x,y,0)=\sin(20\pi x)\sin(\pi y),
\end{equation*}
and homogeneous Dirichlet boundary conditions are imposed on the spatial boundary. The exact solution used for evaluation is
\begin{equation*}
u^\star(x,y,t)
=
\sin(20\pi x)\sin(\pi y)
\exp\left[-\left(c_x(20\pi)^2+c_y\pi^2\right)t\right].
\end{equation*}
The training residual uses the same anisotropic diffusivities,
\begin{equation*}
u_t-c_xu_{xx}-c_yu_{yy}=0,
\end{equation*}
as the analytical reference. The initial, boundary, validation, and test data
are sampled from that reference solution.

\paragraph{Beltrami flow.}
The Beltrami benchmark solves an unsteady three-dimensional incompressible-flow problem on $(x,y,z,t)\in[-1,1]^3\times[0,1]$:
\begin{align*}
u_t + u u_x + v u_y + w u_z + p_x - \nu(u_{xx}+u_{yy}+u_{zz}) &= 0,\\
v_t + u v_x + v v_y + w v_z + p_y - \nu(v_{xx}+v_{yy}+v_{zz}) &= 0,\\
w_t + u w_x + v w_y + w w_z + p_z - \nu(w_{xx}+w_{yy}+w_{zz}) &= 0,\\
u_x+v_y+w_z &= 0.
\end{align*}
The default setting uses $\mathrm{Re}=1$, hence $\nu=1$, with parameters $a=d=1$. The analytical solution is
\begin{align*}
u^\star &= -a\left[
e^{a x}\sin(a y+d z)+e^{a z}\cos(a x+d y)
\right]e^{-d^2 t},\\
v^\star &= -a\left[
e^{a y}\sin(a z+d x)+e^{a x}\cos(a y+d z)
\right]e^{-d^2 t},\\
w^\star &= -a\left[
e^{a z}\sin(a x+d y)+e^{a y}\cos(a z+d x)
\right]e^{-d^2 t},
\end{align*}
and
\begin{align*}
p^\star
=
-\frac{1}{2}a^2
\Big[
&e^{2ax}+e^{2ay}+e^{2az}
+2\sin(ax+dy)\cos(az+dx)e^{a(y+z)} \\
&+2\sin(ay+dz)\cos(ax+dy)e^{a(z+x)} \\
&+2\sin(az+dx)\cos(ay+dz)e^{a(x+y)}
\Big]e^{-2d^2t}.
\end{align*}
Initial and boundary conditions are sampled from this analytical solution, and the same analytical solution is used for validation and testing.

\paragraph{Poisson-5D.}
The Poisson-5D benchmark from PINNacle solves a scalar Poisson equation on the five-dimensional hypercube $\Omega=[0,1]^5$:
\begin{equation*}
\Delta u(x) + f(x)=0,
\end{equation*}
where
\begin{equation*}
f(x)=\frac{\pi^2}{4}\sum_{j=1}^{5}\sin\left(\frac{\pi}{2}x_j\right).
\end{equation*}
The exact solution is
\begin{equation*}
u^\star(x)=\sum_{j=1}^{5}\sin\left(\frac{\pi}{2}x_j\right).
\end{equation*}
Dirichlet boundary conditions are given by the exact solution, $u=u^\star$ on $\partial\Omega$. The reference solution is generated analytically from $u^\star$.

\newpage
\subsection{Network Architectures}

All PINN models use fully connected multilayer perceptrons (MLPs). Each model
maps the spatio-temporal coordinates of a collocation point to a shared hidden
representation and then to the PDE-specific output channels. Concretely, for a
benchmark with input coordinate \(x\in\mathbb R^{d_{\mathrm{in}}}\), the network
has the form
\[
\hat u_\theta(x)
=
W_{L+1}\phi\!\left(
W_L\phi\!\left(\cdots \phi(W_1 x+b_1)\cdots\right)+b_L
\right)+b_{L+1},
\]
where \(L\) is the number of hidden blocks, the hidden width is shared within a
benchmark, and \(\phi\) is the activation function listed in
Table~\ref{tab:network_architectures}. We use one shared trunk per benchmark rather than separate subnetworks for
different loss terms. The residual, boundary, and initial-condition losses are
therefore coupled through the same parameter vector $\theta$.

The input coordinates are the physical coordinates used by the corresponding
benchmark runner. Time-dependent scalar equations use coordinates such as
\((x,t)\), steady two-dimensional flow problems use \((x,y)\), and
high-dimensional Poisson uses its spatial coordinates directly. The output
dimension is determined by the PDE state: scalar equations output one field,
complex or multi-species systems output multiple channels, and incompressible
flow benchmarks output velocity components together with pressure when pressure
appears in the residual. Spatial and temporal derivatives in the PDE residuals
are computed by automatic differentiation through this MLP, so the same network
must support all derivative orders required by the benchmark equation.

Architectures are fixed across all baseline gradient surgery methods and their
+GUA variants. We do not retune network depth, width, activation, or output
parameterization for GUA, so the comparison isolates the effect of GUA rather
than architectural differences. All models use Xavier initialization, and for
each random seed, the base method and its +GUA counterpart use the same
initialization.

\begin{table*}[!htbp]
\centering
\caption{PINN network architectures. Shared architecture fields are repeated to
make the per-benchmark settings explicit.}
\label{tab:network_architectures}
\normalsize
\setlength{\tabcolsep}{3pt}
\renewcommand{\arraystretch}{1.1}
\begin{tabular}{cccccc}
\toprule
\textbf{Benchmark} & \textbf{Input coordinates} & \textbf{Output dim.} &
\textbf{Hidden width} & \textbf{Hidden layers} & \textbf{Activation} \\
\midrule
Burgers  & $(x,t)$ & 1 & 50 & 5 & Tanh \\
Schr\"odinger  & $(x,t)$ & 2 & 50 & 5 & Tanh \\
Kovasznay  & $(x,y)$ & 3 & 50 & 5 & Tanh \\
Poisson-5D  & $(x_1,\ldots,x_5)$ & 1 & 50 & 5 & Tanh \\
Heat-MS  & $(x,y,t)$ & 1 & 50 & 5 & Tanh \\
Beltrami  & $(x,y,z,t)$ & 4 & 50 & 5 & Tanh \\
\bottomrule
\end{tabular}
\end{table*}

\newpage

\subsection{Training Settings}
\label{app:training_settings}

All PINN experiments use Xavier initialization and full-batch training over the
sampled collocation, boundary, and initial-condition points. Unless otherwise
stated, training points are resampled every epoch using Latin-hypercube
sampling. Adam is used as the default optimizer with $\beta_1=0.9$,
$\beta_2=0.999$, and $\epsilon=10^{-8}$. We use no manual scalar reweighting,
with all loss weights set to $\lambda_i=1$ before method-specific gradient
manipulation. Validation is performed every 100 epochs. For each run, we select
the checkpoint with the best validation performance and use it for the final
test evaluation. Each reported PINN result is presented as the mean and
standard deviation over five independent runs using seeds $\{0,1,2,3,4\}$.
All experiments use a 100-epoch linear warmup. In the default setting, the
warmup is followed by cosine decay. Let $w=100$ denote the number of warmup
epochs. At scheduler index $t$, the learning-rate schedule is
\[
\eta_t
=
\begin{cases}
\displaystyle \eta_0\frac{t}{w},
& 0\leq t<w,\\[2mm]
\displaystyle \eta_{\min}
+\frac{1}{2}(\eta_0-\eta_{\min})
\left[
1+\cos\left(\frac{\pi(t-w)}{T-w}\right)
\right],
& w\leq t<T,
\end{cases}
\]
where the first training epoch uses scheduler index $t=0$, $T$ denotes the
configured number of training epochs, and the last applied update uses
$t=T-1$. The default setting uses
$\eta_0=10^{-3}$ and
$\eta_{\min}=10^{-4}$.
The only learning-rate exception is Heat-MS under the 3-loss setting. With
$\eta_0=10^{-3}$, the baseline ConFIG configuration exhibits numerical
instability. To avoid confounding the comparison with this baseline
instability, we set both $\eta_0$ and $\eta_{\min}$ to $10^{-4}$ for all
methods in this setting. After the same 100-epoch linear warmup, the learning
rate remains constant at $10^{-4}$.
Tables~\ref{tab:global_training_settings} and
\ref{tab:training_settings} summarize the training settings and configurations, respectively.

\begin{table}[!htbp]
\centering
\caption{Global PINN training settings.}
\label{tab:global_training_settings}
\normalsize
\setlength{\tabcolsep}{6pt}
\renewcommand{\arraystretch}{1.1}
\begin{tabular}{cc}
\toprule
\textbf{Item} & \textbf{Setting} \\
\midrule
Initialization & Xavier \\
Sampling & Latin-hypercube sampling, resampled every epoch \\
Optimizer & Adam ($\beta_1=0.9$, $\beta_2=0.999$, $\epsilon=10^{-8}$) \\
Default learning rate & 100-epoch warmup, then cosine decay from $10^{-3}$ to $10^{-4}$ \\
Heat-MS/3-loss learning rate & 100-epoch warmup to $10^{-4}$, then constant at $10^{-4}$ \\
Batching & Full batch \\
Loss weights & $\lambda_i=1$ \\
Validation interval & Every 100 epochs \\
Runs & 5 seeds: $\{0,1,2,3,4\}$ \\
\bottomrule
\end{tabular}
\end{table}

\begin{table}[h]
\centering
\caption{PINN training configurations used in the experiments. $N_f$,
$N_b$, and $N_i$ denote the numbers of interior collocation, boundary, and
initial-condition points, respectively.}
\label{tab:training_settings}
\normalsize
\setlength{\tabcolsep}{9pt}
\renewcommand{\arraystretch}{1.1}
\begin{tabular}{ccccc}
\toprule
\textbf{Benchmark} & \textbf{Epochs} & \textbf{$N_f$} & \textbf{$N_b$} &
\textbf{$N_i$} \\
\midrule
Burgers & 30,000 & 10,000 & 250 & 250 \\
Schr\"odinger & 100,000 & 20,000 & 500 & 500 \\
Kovasznay  & 100,000 & 20,000 & 1,000 & -- \\
Poisson-5D & 100,000 & 20,000 & 5,000 & -- \\
Heat-MS & 100,000 & 20,000 & 2,000 & 2,000 \\
Beltrami  & 100,000 & 25,000 & 5,000 & 5,000 \\
\bottomrule
\end{tabular}
\end{table}

\newpage
\subsection{Optimizer Configurations}
\label{app:optimizer_descriptions}

This section summarizes the optimizer configurations used in the Q1
optimizer-induced GUM experiments. We fix ConFIG as the base gradient surgery
method. At each training step, ConFIG constructs a pre-optimizer direction
$a_t$, which is then passed to the corresponding optimizer to produce an
optimizer proposal $u_t$. No GUA is applied in these experiments, so the
proposal is used directly for the parameter update:
\[
    \theta_{t+1} = \theta_t - \eta_t u_t .
\]

The purpose of this section is therefore to specify how each optimizer
transforms the constructed direction $a_t$ into the proposal $u_t$,
\[
    a_t \mapsto u_t.
\]

For notational simplicity, division, square root, clipping, and squaring are
applied element-wise unless otherwise stated. We use $\epsilon>0$ for numerical
stability, $\odot$ for element-wise multiplication, and $\lambda$ for the
weight-decay coefficient when applicable. For adaptive optimizers, $m_t$ and
$v_t$ denote the first- and second-moment states, respectively. Barred
quantities denote the tentative states after processing $a_t$ at step $t$.
These states produce the proposal $u_t$ under the time-indexing convention
introduced in Eq.~(\ref{eq:optimizer_transformation}).
\begin{table}[!htbp]
\centering
\caption{Optimizer update maps used in the optimizer-induced GUM experiments.}
\label{tab:optimizer_descriptions}
\scriptsize
\setlength{\tabcolsep}{4pt}
\renewcommand{\arraystretch}{1.4}
\begin{tabular}{@{}ccl@{}}
\toprule
\textbf{Family} & \textbf{Optimizer} & \textbf{Effective update map} \\
\midrule
First-order
& SGD
& \(u_t = a_t\) \\
\addlinespace[1.5pt]

\rowcolor{gray!10} First-order
& M-SGD
& \(\bar m_{t+1} = \beta m_t + a_t,\quad u_t = \bar m_{t+1}\) \\
\addlinespace[1.5pt]

First-order
& RMSProp
& \(\bar v_{t+1} = \rho v_t + (1-\rho)a_t^2,\quad
   u_t = a_t / (\sqrt{\bar v_{t+1}}+\epsilon)\) \\
\addlinespace[1.5pt]

\rowcolor{gray!10} First-order
& Adam
& $\begin{aligned}
  &\bar m_{t+1}=\beta_1m_t+(1-\beta_1)a_t,\quad
    \bar v_{t+1}=\beta_2v_t+(1-\beta_2)a_t^2 \\
  &\hat{\bar m}_{t+1}=\bar m_{t+1}/(1-\beta_1^{t+1}),\quad
    \hat{\bar v}_{t+1}=\bar v_{t+1}/(1-\beta_2^{t+1}),\quad
    u_t=\hat{\bar m}_{t+1}/(\sqrt{\hat{\bar v}_{t+1}}+\epsilon)
\end{aligned}$ \\
\addlinespace[1.5pt]

First-order
& AdamW
& \(u_t=\hat{\bar m}_{t+1}/(\sqrt{\hat{\bar v}_{t+1}}+\epsilon)+\lambda\theta_t\) \\
\addlinespace[1.5pt]

\midrule
\rowcolor{gray!10} Second-order
& SophiaG
& $\begin{aligned}
  &\bar m_{t+1}=\beta_1m_t+(1-\beta_1)a_t,\quad
    \bar h_{t+1}=\beta_2h_t+(1-\beta_2)\hat h_t \\
  &u_t=\operatorname{clip}\!\left(\bar m_{t+1}/(\gamma\bar h_{t+1}+\epsilon),\,\rho\right)
\end{aligned}$ \\
\addlinespace[1.5pt]

Second-order
& AdaHessian
& $\begin{aligned}
  &\bar m_{t+1}=\beta_1m_t+(1-\beta_1)a_t,\quad
    \bar h_{t+1}=\beta_2h_t+(1-\beta_2)\hat h_t \\
  &u_t=\bar m_{t+1}/(\bar h_{t+1}^{1/2}+\epsilon)
\end{aligned}$ \\
\addlinespace[1.5pt]

\rowcolor{gray!10} Second-order
& SOAP
& $\begin{aligned}
  &\tilde a_t = Q_t^{\top}a_t,\quad
    \tilde{\bar m}_{t+1}=\beta_1\tilde m_t+(1-\beta_1)\tilde a_t \\
  &\tilde{\bar v}_{t+1}=\beta_2\tilde v_t+(1-\beta_2)\tilde a_t^2,\quad
    u_t=Q_t\!\left(\tilde{\bar m}_{t+1}/(\sqrt{\tilde{\bar v}_{t+1}}+\epsilon)\right)
\end{aligned}$ \\
\bottomrule
\end{tabular}
\end{table}

\newpage
\subsection{Baseline Settings}
\label{app:baseline_settings}

We compare GUA with several representative gradient surgery methods. All gradient surgery methods first construct a direction $a_t$ from the loss-specific gradients $\{g_{i,t}\}_{i=1}^{m}$, and then feed this direction into the same base optimizer. This allows us to evaluate whether the direction constructed by gradient
surgery before optimizer transformation is preserved after the optimizer
transformation.

\paragraph{PCGrad.}
PCGrad~\cite{yu2020gradient} mitigates gradient conflict by modifying loss-specific gradients pairwise. When two loss-specific gradients have a negative inner product, PCGrad projects one gradient onto the normal plane of the other to remove the conflicting component. The modified gradients are then aggregated to form the constructed direction $a_t$. PCGrad is a representative projection-based gradient surgery method.

\paragraph{CAGrad.}
CAGrad~\cite{liu2021conflict} constructs a direction that balances conflict reduction with optimization of the average loss. Instead of only removing pairwise conflicts, CAGrad searches for a direction near the average gradient that improves the worst local task descent. This makes it a conflict-averse aggregation method that tries to reduce task interference while remaining aligned with the average-loss objective.

\paragraph{IMTL-G.}
IMTL-G~\cite{liu2021towards} aims to obtain an impartial direction across tasks. It computes aggregation weights such that the resulting direction has balanced projections onto different loss-specific gradients. In this way, IMTL-G avoids favoring a subset of objectives and provides a balanced constructed direction for multi-loss training.

\paragraph{A-MTL.}
A-MTL~\cite{senushkin2023independent} addresses gradient conflict and gradient dominance by aligning components of the gradient matrix. It treats the set of loss-specific gradients as a linear system and improves optimization stability by reducing ill-conditioning and dominance among loss-specific gradients. We include A-MTL as a representative gradient alignment method beyond pairwise projection or cone-based constraints.

\paragraph{UPGrad.}
UPGrad~\cite{quinton2024jacobian} is based on dual-cone projection in multi-objective optimization. It projects loss-specific gradients onto the dual cone induced by the set of objective gradients and aggregates the projected directions. The resulting update is designed to avoid locally increasing any objective under the current first-order geometry.

\paragraph{ConFIG.}
ConFIG~\cite{liu2025config} is a conflict-free direction construction method designed for multi-loss training in PINNs. Given loss-specific gradients, ConFIG constructs a direction whose inner product with each loss-specific gradient is non-negative. Therefore, its constructed direction satisfies $a_t\in\mathcal C_t$ under the current gradient geometry.

\newpage
\subsection{Evaluation Metrics}
\label{app:evaluation_metrics}

For PINN benchmarks, we report the relative $L_2$ error against the reference
solution and the four-stage conflict-rate sequence
$R_g\!\rightarrow R_a\!\rightarrow R_u\!\rightarrow R_p$. Here $R_g$
measures raw loss-specific gradient conflict, $R_a$ the constructed direction,
$R_u$ the optimizer proposal, and $R_p$ the aligned update. Since a base
method does not construct $p_t$, $R_p$ is not applicable to baseline runs.

Let $\{x_j\}_{j=1}^{N_{\mathrm{test}}}$ denote the test points, let
$\hat u_\theta$ be the PINN prediction, and let $u^\star$ be the reference
solution. The relative $L_2$ error is
\begin{equation*}
\mathrm{Rel}\text{-}L_2
=
\frac{
\left(\sum_{j=1}^{N_{\mathrm{test}}}
\|\hat u_\theta(x_j)-u^\star(x_j)\|_2^2\right)^{1/2}
}{
\left(\sum_{j=1}^{N_{\mathrm{test}}}
\|u^\star(x_j)\|_2^2\right)^{1/2}
}.
\end{equation*}
All methods and variants use the same benchmark-specific evaluation points.
In particular, Poisson-5D uses the deterministic PINNacle-compatible uniform
grid produced by eight interior coordinates per dimension, giving
\(8^5=32{,}768\) evaluation points.

At training step $t$, let $\{g_{i,t}\}_{i=1}^{m}$ be the loss-specific
gradients, $a_t$ the constructed direction produced by the base gradient
surgery method, and $u_t$ the optimizer proposal produced by the base optimizer.
The current conflict-free cone is
\begin{equation*}
\mathcal C_t
=
\left\{
d:\ \langle g_{i,t},d\rangle\ge 0,\quad i=1,\ldots,m
\right\}.
\end{equation*}
These quantities use the same ideal cone-membership definitions as in the main
text. In the actual implementation, a conflict is detected when the normalized inner
product falls below $-10^{-6}$. We use a small negative threshold rather than
zero to avoid treating tiny negative values caused by floating-point and
numerical errors near the conflict boundary as genuine conflicts. The
denominator includes $\epsilon$ for numerical stability, and only diagnosed
steps with a positive learning rate are included.
For a generic update direction $d_t$, the empirical conflict indicator is
\begin{equation*}
\mathbb I_t(d_t)
=
\mathbb I\!\left[
\exists i\in\{1,\ldots,m\}:
\frac{\langle g_{i,t},d_t\rangle}
{\|g_{i,t}\|_2\|d_t\|_2+\epsilon}<-10^{-6}
\right].
\end{equation*}
The raw loss-specific gradient conflict rate is
\begin{equation*}
R_g
=
\frac{1}{T}\sum_{t=1}^{T}
\mathbb I\!\left[\exists i<j:\
\frac{\langle g_{i,t},g_{j,t}\rangle}
{\|g_{i,t}\|_2\|g_{j,t}\|_2+\epsilon}<-10^{-6}\right].
\end{equation*}
Here \(T\) counts diagnosed steps with a positive learning rate. The zero-rate warmup step is omitted because it produces no applied update.
We report the constructed-direction $a_t$, optimizer-proposal $u_t$, and aligned-update $p_t$
conflict rates as
\begin{equation*}
R_a
=
\frac{1}{T}\sum_{t=1}^{T}\mathbb I_t(a_t),
\qquad
R_u
=
\frac{1}{T}\sum_{t=1}^{T}\mathbb I_t(u_t),
\qquad
R_p
=
\frac{1}{T}\sum_{t=1}^{T}\mathbb I_t(p_t).
\end{equation*}
The complete diagnostic sequence is therefore
$R_g\!\rightarrow R_a\!\rightarrow R_u\!\rightarrow R_p$.
Baseline tables report the first three stages and use ``--'' for $R_p$.

\newpage
\subsection{GUA Implementation Details}
\label{app:implementation_details}
\label{app:gua_algorithm}

This section provides the implementation details corresponding to Algorithm~\ref{alg:gua}.

\begin{algorithm}[t]
\caption{Gradient--Update Alignment (GUA)}
\label{alg:gua}
\begin{algorithmic}[1]
\Require Current parameters $\theta_t$, optimizer state $s_t$, losses $\{\mathcal L_i\}_{i=1}^{m}$, base gradient surgery method, base optimizer $\mathcal O_t$, learning rate $\eta_t$
\Ensure Updated parameters $\theta_{t+1}$ and optimizer state $s_{t+1}$
\State Compute loss-specific gradients $\{g_{i,t}\}_{i=1}^{m}$
\State Construct the pre-optimizer direction $a_t$ using the base gradient surgery method
\State Obtain the optimizer proposal: $(u_t,\bar s_{t+1}) \gets \mathcal O_t(a_t,s_t)$
\State \textbf{Direction  alignment:} project $u_t$ onto the conflict-free cone:
$p_t \gets \arg\min_{d\in\mathcal C_t}\frac{1}{2}\|d-u_t\|_{M_t}^{2}$
\State Apply the aligned update: $\theta_{t+1}\gets\theta_t-\eta_t p_t$
\State \textbf{State alignment:} if the projection modifies the proposal, compute \(\tilde s_{t+1}\gets\Phi_t(p_t,u_t,\bar s_{t+1})\) and set $s_{t+1}$ by Eq.~(\ref{eq:state_alignment_generic}); otherwise set $s_{t+1}\gets\bar s_{t+1}$
\State \Return $(\theta_{t+1},s_{t+1})$
\end{algorithmic}
\end{algorithm}

\subsubsection{Projection Solver}
\label{app:projection_solver}
Let $G_t=[g_{1,t},\ldots,g_{m,t}]$ collect the loss-specific gradients as
flattened vectors, and let
\(\mathcal C_t=\{d:G_t^\top d\ge0\}\) denote the current conflict-free
cone. Given the optimizer proposal \(u_t\), GUA computes a reference
update \(p_t\in\mathcal C_t\) that stays close to \(u_t\) under a chosen
projection metric:
\[
p_t
=
\arg\min_d \frac{1}{2}\|d-u_t\|_{M_t}^2
\quad
\mathrm{s.t.}\quad
G_t^\top d \ge 0 .
\]
The projection is performed in update space: the cone $\mathcal C_t$ is induced by
the current loss-specific gradients, and GUA projects the optimizer proposal
after optimizer transformation rather than the parameters themselves. The
Euclidean metric sets $M_t=I$, yielding the standard least-change
projection in the original parameter space:
\[
p_t^{\mathrm{Euc}}
=
\arg\min_d \frac{1}{2}\|d-u_t\|_2^2
\quad
\mathrm{s.t.}\quad
G_t^\top d \ge 0 .
\]
The Adam metric instead measures the projection distance in the geometry
induced by Adam's second-moment state. Let \(\hat{\bar v}_{t+1}\) be the
bias-corrected tentative second-moment state associated with $u_t$. We use a
diagonal metric
\[
M_t=\operatorname{diag}(\sqrt{\hat{\bar v}_{t+1}}+\epsilon),
\qquad
\|x\|_{M_t}^2=x^\top M_t x .
\]
Thus,
\[
p_t^{\mathrm{Adam}}
=
\arg\min_d \frac{1}{2}\|d-u_t\|_{M_t}^2
\quad
\mathrm{s.t.}\quad
G_t^\top d \ge 0 .
\]
In practice, \(M_t\) can be chosen at different levels of structure. The
identity metric is optimizer-agnostic and gives the smallest implementation
surface. A diagonal metric, such as the Adam metric above, is optimizer-guided
while preserving the \(O(m^2p)\) Gram-matrix cost and avoiding any \(p\times p\)
storage. Block-diagonal or full preconditioner metrics could be used when an
optimizer exposes such structure, but they require applying \(M_t^{-1}\) to
loss-gradient vectors and may increase both memory and implementation cost.
For this reason, our experiments restrict \(M_t\) to the two practically useful
choices \(I\) and diagonal Adam metric.

Official GUA experiments use the Adam metric by default for both two-loss and
three-loss decompositions. The projection is performed on the flattened
full-network parameter vector, not separately per layer, because the
conflict-free cone is defined by global inner products with loss gradients;
layer-wise projection would instead impose separate cones on individual layers
and change the intended update-level geometry. Before solving, we
normalize each nonzero constraint gradient and the optimizer proposal for
numerical stability, and restore the proposal scale after projection. This
positive rescaling leaves the conflict-free cone unchanged, so the constraints
remain induced by the same loss-specific gradients computed by the base
gradient surgery method.

The projection can be solved entirely in the \(m\)-dimensional dual, where \(m\)
is the number of losses. In our PINN experiments, \(m\in\{2,3\}\), so we solve
this dual problem exactly by enumerating active sets. Define
\[
b=G_t^\top u_t,\qquad K=G_t^\top M_t^{-1}G_t .
\]
If \(b\ge0\), the optimizer proposal is already conflict-free and we return
\(p_t=u_t\). Otherwise, we enumerate each active constraint set
\(S\subseteq\{1,\ldots,m\}\). For each active set, we solve the small
KKT system
\[
K_{SS}\lambda_S=-b_S,
\]
set inactive multipliers to zero, and check the KKT feasibility conditions
\[
\lambda_S\ge0,\qquad b+K\lambda\ge0 .
\]
Each feasible active set yields
\[
p_t(\lambda)=u_t+M_t^{-1}G_t\lambda .
\]
Among feasible active sets, we choose the one with the smallest projection
objective. This enumeration is used only for the small-\(m\) PINN setting. In
the Euclidean case \(M_t=I\), this reduces to
\(K=G_t^\top G_t\) and \(p_t=u_t+G_t\lambda\). In the Adam-metric case,
\(M_t\) is the positive diagonal metric defined above. Since \(m\in\{2,3\}\)
in our PINN experiments, enumerating all active sets is inexpensive and gives
the exact projection for the chosen metric.

The projection does not form or store any \(p\times p\) matrix. Its
parameter-dimensional cost is dominated by constructing the Gram matrix
\(K=G_t^\top M_t^{-1}G_t\), which costs \(O(m^2p)\) for diagonal \(M_t\) after
the loss-specific gradients have been computed. Active-set enumeration costs
\(O(2^m m^3)\) in the worst case for solving the small dual systems, so it is
appropriate only when \(m\) is small. For CelebA settings with \(m>8\), we
instead solve the same projection as the
low-dimensional nonnegative dual quadratic program
\[
    \min_{\lambda\ge0}
    \frac12\lambda^\top K\lambda + b^\top\lambda ,
\]
using an active-set solver over the \(m\)-dimensional dual variable. This avoids
enumerating \(2^m\) subsets while keeping the large parameter dimension outside
the QP. The large parameter dimension enters only through gradient inner
products, and the remaining optimization is over the number of losses.

\newpage
\subsubsection{Optimizer-State Alignment}
\label{app:optimizer_state_alignment}

The state-alignment rule is defined in Eq.~(\ref{eq:state_alignment_generic}). Here we detail its Adam instantiation, as Adam is used by the gradient-surgery baselines in their standard settings and admits a simple state reconstruction through its first- and second-moment statistics. 

Let $\bar m_{t+1}$ and $\bar v_{t+1}$ denote the tentative Adam first- and
second-moment states associated with the optimizer proposal $u_t$. Their
bias-corrected versions are
\[
\hat{\bar m}_{t+1}
=
\frac{\bar m_{t+1}}{1-\beta_1^{t+1}},
\qquad
\hat{\bar v}_{t+1}
=
\frac{\bar v_{t+1}}{1-\beta_2^{t+1}}.
\]

Given the applied update $p_t$, we construct the pseudo-gradient
\[
g^{\mathrm{corr}}_{t+1}
=
(\sqrt{\hat{\bar v}_{t+1}}+\epsilon)
\odot p_t,
\]
which would produce $p_t$ under the current Adam denominator. The target Adam
states are
\[
\tilde m_{t+1}
=
(1-\beta_1^{t+1})g^{\mathrm{corr}}_{t+1},
\qquad
\tilde v_{t+1}
=
(1-\beta_2^{t+1})
(g^{\mathrm{corr}}_{t+1})^{\odot 2}.
\]

Finally, GUA softly aligns the stored Adam moments:
\[
m_{t+1}
=
(1-\rho_m)\bar m_{t+1}
+
\rho_m\tilde m_{t+1},
\qquad
v_{t+1}
=
(1-\rho_v)\bar v_{t+1}
+
\rho_v\tilde v_{t+1}.
\]

Here, $\rho_m$ and $\rho_v$ control the alignment strengths of the first and
second moments, respectively. The values used in our experiments are reported
in Table~\ref{tab:state_alignment_rho} and are selected based on validation
performance.

\begin{table}[h]
\centering
\caption{Coefficients used for optimizer-state alignment.}
\label{tab:state_alignment_rho}
\normalsize
\begin{tabular}{ccc}
\toprule
Equation & \(\rho_m\) & \(\rho_v\) \\
\midrule
Burgers & 0.10 & 0.03 \\
Schr\"odinger & 0.10 & 0.03 \\
Kovasznay & 0.70 & 0.20 \\
Heat-MS & 0.70 & 0.20 \\
Poisson-5D & 0.50 & 0.15 \\
Beltrami & 0.70 & 0.20 \\
\bottomrule
\end{tabular}
\end{table}
Notably, the state-alignment principle is not restricted to Adam. It can be extended to other stateful optimizers when target internal states can be reconstructed from the applied update according to the corresponding update rules. The corresponding implementations for different optimizers and their results are reported in Appendix~\ref{app:gua_cross_optimizer}.

\newpage
\section{Additional Results, Ablations, and Diagnostics}
\label{app:additional_gum_diagnostics}

\subsection{Projection Metric Ablation}
\label{app:projection_metric_diagnostic}

As defined in Eq.~(\ref{eq:gua_projection}), GUA projects the optimizer
proposal $u_t$ onto the conflict-free cone $\mathcal C_t$ under the metric
$M_t$.
We compare two choices of $M_t$: the Euclidean metric $M_t=I$ and the
Adam metric defined by Adam's diagonal second-moment statistics. The Euclidean
metric treats all parameter coordinates uniformly, whereas the Adam metric
measures the projection distance in the optimizer-induced coordinate geometry.
Figure~\ref{fig:projection_metric_ablation} compares their accuracy and numerical
stability across PINN settings. A hatched bar denotes partial numerical
instability, while a cross denotes complete numerical failure.

Overall, the Adam metric provides accuracy comparable to or better than the
Euclidean metric across the evaluated settings. The difference is modest for
some equations, such as Schr\"odinger and Beltrami flow, but becomes more
pronounced for Burgers, Kovasznay flow, and Poisson-5D. The clearest distinction
appears on Heat-MS: under the Euclidean metric, the two-loss setting exhibits
partial numerical instability and the three-loss setting becomes entirely
non-finite, whereas the Adam metric remains numerically stable in both cases.

These results indicate that the projection metric affects more than the
feasibility of the corrected update. Both choices enforce the same
conflict-free constraints, but they select different feasible updates by
measuring the deviation from $u_t$ in different geometries. The Euclidean
metric ignores the coordinate-wise scaling already induced by Adam, whereas
the Adam metric accounts for this anisotropy through the optimizer's
second-moment statistics. The improved stability observed on Heat-MS is
therefore consistent with preserving the optimizer-induced geometry during
projection. Since the Adam metric is at least competitive in accuracy and
provides better numerical robustness in the difficult settings, we use it as
the default projection metric in the main experiments.

\begin{figure*}[!htbp]
\centering
\includegraphics[width=0.84\textwidth]{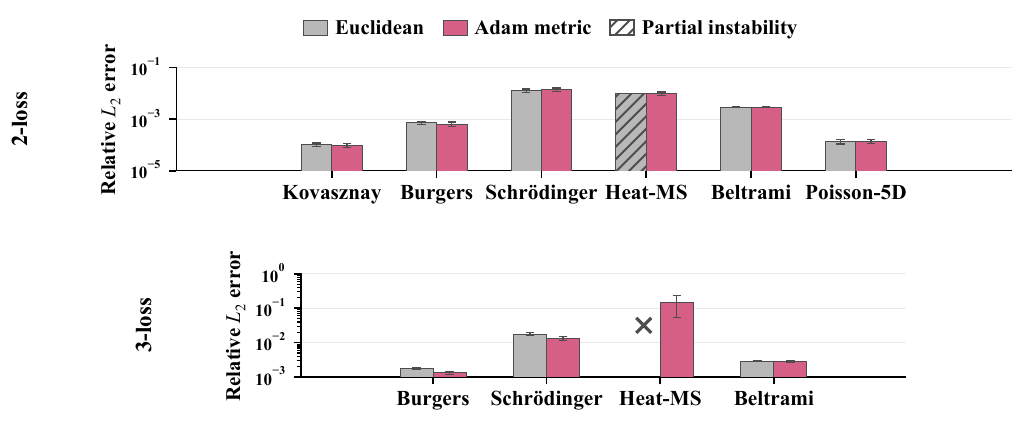}
\caption{Euclidean and Adam projection metrics across PINN equations. Hatching marks partial numerical instability, and a \(\times\) marks complete numerical failure.}
\label{fig:projection_metric_ablation}
\end{figure*}

\newpage
\subsection{State-Alignment Sensitivity}
\label{app:rho_sensitivity}

State alignment introduces two coefficients, $\rho_m$ and $\rho_v$, controlling
how strongly the Adam moments are moved toward the target state. We examine
whether GUA is sensitive to these coefficients using seven predefined pairs,
\begin{equation*}
\begin{aligned}
(\rho_m,\rho_v)\in\{&
(0.003,0.001),(0.01,0.003),(0.03,0.01),(0.1,0.03),\\
& (0.3,0.1),(0.5,0.15),(0.7,0.2)\},
\end{aligned}
\end{equation*}
across all PINN settings, six gradient surgery methods, and five seeds.

The coefficient pair used in the main experiments is selected using validation
performance, as described in Appendix~\ref{app:optimizer_state_alignment}, and
shared across loss decompositions and gradient surgery methods for each equation.
The test set is not used for selection.
Figure~\ref{fig:rho_sensitivity} reports sensitivity over the full coefficient
grid using the geometric mean of paired relative-error ratios to the corresponding
baseline, where values below one indicate improvement. Validation-selected
coefficients are highlighted, and the single numerical failure is retained.
Overall, GUA is broadly robust to the state-alignment coefficients, with relative-error
ratios below one across most equations, methods, and coefficient pairs. Burgers,
Beltrami flow, and Poisson-5D are particularly robust. Sensitivity varies across
equations: Schr\"odinger and Burgers favor moderate alignment, Heat-MS tends to
benefit from stronger alignment, while Kovasznay flow and Poisson-5D show greater
method-dependent variation.

\begin{figure}[!h]
\centering
\includegraphics[width=\textwidth]{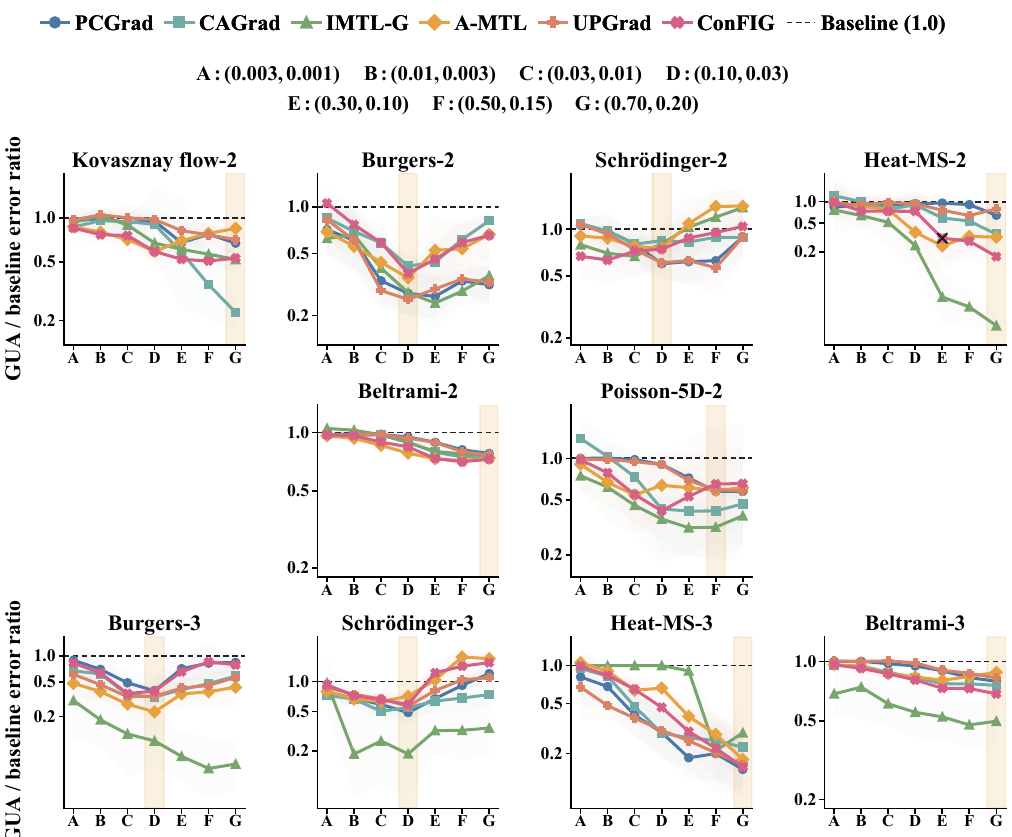}
\caption{State-alignment sensitivity across PINN equations and gradient surgery methods. Shaded bands show standard deviation, the dashed line marks baseline parity, orange bands identify the selected coefficients, and a black cross marks numerical failure.}
\label{fig:rho_sensitivity}
\end{figure}

\newpage
\subsection{Optimizer-Induced GUM Results on Kovasznay, Beltrami, and Poisson-5D}
\label{app:additional_optimizer_gum}

Owing to space limitations, the main text presents Q1 optimizer-induced GUM results for a subset of the PINN benchmarks, with additional results provided in the appendix.
Table~\ref{tab:optimizer_sources_remaining}
reports the corresponding results for Kovasznay, Beltrami, and Poisson-5D under the same
experimental protocol. Consistent with the main-text results, ConFIG yields
$R_a=0$, while momentum, adaptive, and curvature-based optimizers can produce
nonzero post-optimizer conflict rates $R_u$. Plain SGD preserves $R_u=0$, as
expected from $u_t=a_t$.

\begin{table*}[h]
\centering
\caption{Optimizer-induced GUM across the remaining PINN settings. The sequence \(R_g \rightarrow R_a \rightarrow R_u\) tracks conflict rates ($\downarrow$) from raw loss-specific gradients through gradient surgery to optimizer transformation.}
\label{tab:optimizer_sources_remaining}
\scriptsize
\setlength{\tabcolsep}{2pt}
\renewcommand{\arraystretch}{1.3}
\begin{tabular}{c *{3}{w{c}{16pt}} @{\hspace{6pt}} *{3}{w{c}{16pt}} @{\hspace{6pt}} *{3}{w{c}{16pt}} @{\hspace{6pt}} *{3}{w{c}{16pt}}}
\toprule
\multicolumn{1}{c}{\multirow{3}{*}[-0.5ex]{\textbf{Optimizer}}}
& \multicolumn{3}{c}{\textbf{Kovasznay}}
& \multicolumn{6}{c}{\textbf{Beltrami}}
& \multicolumn{3}{c}{\textbf{Poisson-5D}} \\
\cmidrule(lr){2-4} \cmidrule(lr){5-10} \cmidrule(lr){11-13}
& \multicolumn{3}{c}{\textbf{2-loss}} & \multicolumn{3}{c}{\textbf{2-loss}} & \multicolumn{3}{c}{\textbf{3-loss}} & \multicolumn{3}{c}{\textbf{2-loss}} \\
\cmidrule(lr){2-4} \cmidrule(lr){5-7} \cmidrule(lr){8-10} \cmidrule(lr){11-13}
& $R_g$ & $R_a$ & $R_u$ & $R_g$ & $R_a$ & $R_u$ & $R_g$ & $R_a$ & $R_u$ & $R_g$ & $R_a$ & $R_u$ \\
\midrule
\rowcolor{gray!22}\multicolumn{13}{c}{\textbf{First-order optimizers}} \\
SGD & 100.0 & 0.0 & 0.0 & 100.0 & 0.0 & 0.0 & 100.0 & 0.0 & 0.0 & 87.9 & 0.0 & 0.0 \\
M-SGD & 99.9 & 0.0 & 53.3 & 79.0 & 0.0 & 22.9 & 90.3 & 0.0 & 43.2 & 64.9 & 0.0 & 41.9 \\
RMSProp & 10.0 & 0.0 & 1.7 & 3.3 & 0.0 & 0.0 & 32.5 & 0.0 & 0.9 & 81.9 & 0.0 & 0.2 \\
Adam & 5.5 & 0.0 & 6.4 & 13.1 & 0.0 & 11.6 & 27.5 & 0.0 & 16.8 & 52.6 & 0.0 & 30.5 \\
AdamW & 8.9 & 0.0 & 8.4 & 14.6 & 0.0 & 10.7 & 29.0 & 0.0 & 17.4 & 52.9 & 0.0 & 32.6 \\
\midrule
\rowcolor{gray!22}\multicolumn{13}{c}{\textbf{Second-order optimizers}} \\
SophiaG & 55.6 & 0.0 & 42.9 & 54.1 & 0.0 & 20.2 & 63.0 & 0.0 & 29.4 & 27.2 & 0.0 & 13.0 \\
AdaHessian & 99.9 & 0.0 & 36.8 & 99.9 & 0.0 & 53.2 & 99.9 & 0.0 & 57.7 & 98.8 & 0.0 & 8.5 \\
SOAP & 3.3 & 0.0 & 5.8 & 20.9 & 0.0 & 8.8 & 28.6 & 0.0 & 12.3 & 44.5 & 0.0 & 30.8 \\
\bottomrule
\end{tabular}
\end{table*}

\newpage
\subsection{Full PINN Results}
\label{app:full_pinn_results}
Given the space limitations of the main text, we summarize Q2 there using
configuration-level ranges and medians. This section reports the complete Q2
results for each evaluated method--benchmark configuration, including relative
$L_2$ errors and conflict-rate transitions over the four training stages. For
the six gradient-surgery methods (excluding the Adam control), the summary
contains 36 two-loss and 24 three-loss method--benchmark configurations. The
corresponding medians reported in the main text are medians over these
configuration-level reductions.

\subsubsection{Quantitative PINN Results for Q2}
Table~\ref{tab:full_q2_results} reports the complete relative $L_2$ errors.
Across all evaluated base methods and PINN settings, adding GUA reduces the
mean error, consistent with the aggregate results reported in the main text.
Adam+GUA also improves over Adam in all ten settings despite using no
pre-optimizer gradient surgery.
\begin{table*}[h]
\centering
\caption{Complete Q2 relative $L_2$ errors (mean \(\pm\) sample standard deviation over five seeds). Here, $E$ denotes the relative $L_2$ error defined in Appendix~\ref{app:evaluation_metrics}. ``Imp.'' denotes $100(1-\bar E_{\mathrm{GUA}}/\bar E_{\mathrm{base}})$.}
\label{tab:main_results}
\label{tab:full_q2_results}
\scriptsize
\setlength{\tabcolsep}{0.58pt}
\renewcommand{\arraystretch}{1.15}
\begin{tabular}{c c c c c c c c c c c}
\toprule
\multicolumn{1}{c}{\multirow{2}{*}[-0.3ex]{\textbf{Method}}}
& \multicolumn{4}{c}{\textbf{Schr\"odinger}}
& \multicolumn{4}{c}{\textbf{Burgers}}
& \multicolumn{2}{c}{\textbf{Kovasznay}} \\
\cmidrule(lr){2-5} \cmidrule(lr){6-9} \cmidrule(lr){10-11}
& 2-loss & \imphead & 3-loss & \imphead & 2-loss & \imphead & 3-loss & \imphead & 2-loss & \imphead \\
\midrule
\rowcolor{gray!14} Adam & \qtwoentry{3.15e-2}{1.10e-2} & \imprsingle{--} & \qtwoentry{3.15e-2}{1.10e-2} & \imprsingle{--} & \qtwoentry{4.55e-3}{1.03e-3} & \imprsingle{--} & \qtwoentry{3.07e-3}{1.37e-3} & \imprsingle{--} & \qtwoentry{4.69e-4}{7.44e-5} & \imprsingle{--} \\
\rowcolor{gray!14} \textbf{+GUA} & \textbf{\qtwoentry{1.74e-2}{4.19e-3}} & \imprsingle{44.8} & \textbf{\qtwoentry{2.42e-2}{6.59e-3}} & \imprsingle{23.2} & \textbf{\qtwoentry{1.18e-3}{5.86e-4}} & \imprsingle{74.1} & \textbf{\qtwoentry{7.77e-4}{3.35e-4}} & \imprsingle{74.7} & \textbf{\qtwoentry{3.46e-4}{1.53e-5}} & \imprsingle{26.2}\\
\addlinespace[1.5pt]
PCGrad & \qtwoentry{2.51e-2}{7.78e-3} & \imprsingle{--} & \qtwoentry{2.12e-2}{5.21e-3} & \imprsingle{--} & \qtwoentry{3.05e-3}{7.31e-4} & \imprsingle{--} & \qtwoentry{2.02e-3}{8.11e-4} & \imprsingle{--} & \qtwoentry{4.82e-4}{1.02e-4} & \imprsingle{--} \\
\textbf{+GUA} & \textbf{\qtwoentry{1.46e-2}{1.52e-3}} & \imprsingle{41.8} & \textbf{\qtwoentry{1.01e-2}{8.11e-4}} & \imprsingle{52.4} & \textbf{\qtwoentry{8.35e-4}{1.32e-4}} & \imprsingle{72.6} & \textbf{\qtwoentry{7.61e-4}{1.81e-4}} & \imprsingle{62.3} & \textbf{\qtwoentry{3.26e-4}{7.42e-5}} & \imprsingle{32.4}\\
\addlinespace[1.5pt]
\rowcolor{gray!14} CAGrad & \qtwoentry{1.40e-2}{1.53e-3} & \imprsingle{--} & \qtwoentry{2.00e-2}{5.06e-3} & \imprsingle{--} & \qtwoentry{1.70e-3}{4.36e-4} & \imprsingle{--} & \qtwoentry{2.25e-3}{4.90e-4} & \imprsingle{--} & \qtwoentry{4.78e-4}{4.76e-5} & \imprsingle{--} \\
\rowcolor{gray!14} \textbf{+GUA} & \textbf{\qtwoentry{1.18e-2}{8.77e-4}} & \imprsingle{15.7} & \textbf{\qtwoentry{1.08e-2}{1.92e-3}} & \imprsingle{46.0} & \textbf{\qtwoentry{6.97e-4}{1.26e-4}} & \imprsingle{59.0} & \textbf{\qtwoentry{7.39e-4}{5.89e-5}} & \imprsingle{67.2} & \textbf{\qtwoentry{1.11e-4}{3.12e-5}} & \imprsingle{76.8}\\
\addlinespace[1.5pt]
IMTL-G & \qtwoentry{4.12e-2}{1.09e-2} & \imprsingle{--} & \qtwoentry{5.60e-1}{3.12e-1} & \imprsingle{--} & \qtwoentry{2.51e-3}{8.24e-4} & \imprsingle{--} & \qtwoentry{5.83e-2}{3.66e-2} & \imprsingle{--} & \qtwoentry{1.79e-4}{2.83e-5} & \imprsingle{--} \\
\textbf{+GUA} & \textbf{\qtwoentry{3.21e-2}{1.86e-3}} & \imprsingle{22.1} & \textbf{\qtwoentry{1.02e-1}{5.91e-2}} & \imprsingle{81.8} & \textbf{\qtwoentry{6.82e-4}{6.76e-5}} & \imprsingle{72.8} & \textbf{\qtwoentry{5.14e-3}{1.07e-3}} & \imprsingle{91.2} & \textbf{\qtwoentry{9.25e-5}{1.03e-5}} & \imprsingle{48.3}\\
\addlinespace[1.5pt]
\rowcolor{gray!14} A-MTL & \qtwoentry{1.60e-2}{1.87e-3} & \imprsingle{--} & \qtwoentry{3.00e-2}{3.78e-3} & \imprsingle{--} & \qtwoentry{2.95e-3}{5.24e-4} & \imprsingle{--} & \qtwoentry{7.31e-3}{2.89e-3} & \imprsingle{--} & \qtwoentry{1.83e-4}{1.56e-5} & \imprsingle{--} \\
\rowcolor{gray!14} \textbf{+GUA} & \textbf{\qtwoentry{1.23e-2}{2.93e-3}} & \imprsingle{23.1} & \textbf{\qtwoentry{2.15e-2}{4.18e-3}} & \imprsingle{28.3} & \textbf{\qtwoentry{1.05e-3}{2.72e-4}} & \imprsingle{64.4} & \textbf{\qtwoentry{1.56e-3}{3.34e-4}} & \imprsingle{78.7} & \textbf{\qtwoentry{1.56e-4}{2.17e-5}} & \imprsingle{14.8}\\
\addlinespace[1.5pt]
UPGrad & \qtwoentry{2.56e-2}{7.20e-3} & \imprsingle{--} & \qtwoentry{1.98e-2}{4.10e-3} & \imprsingle{--} & \qtwoentry{3.03e-3}{4.96e-4} & \imprsingle{--} & \qtwoentry{3.08e-3}{1.46e-3} & \imprsingle{--} & \qtwoentry{4.49e-4}{6.56e-5} & \imprsingle{--} \\
\textbf{+GUA} & \textbf{\qtwoentry{1.52e-2}{2.12e-3}} & \imprsingle{40.6} & \textbf{\qtwoentry{1.10e-2}{1.10e-3}} & \imprsingle{44.4} & \textbf{\qtwoentry{7.68e-4}{1.12e-4}} & \imprsingle{74.7} & \textbf{\qtwoentry{9.69e-4}{1.80e-4}} & \imprsingle{68.5} & \textbf{\qtwoentry{3.23e-4}{7.83e-5}} & \imprsingle{28.1}\\
\addlinespace[1.5pt]
\rowcolor{gray!14} ConFIG & \qtwoentry{1.89e-2}{5.46e-3} & \imprsingle{--} & \qtwoentry{2.33e-2}{7.89e-3} & \imprsingle{--} & \qtwoentry{1.74e-3}{3.38e-4} & \imprsingle{--} & \qtwoentry{3.48e-3}{8.30e-4} & \imprsingle{--} & \qtwoentry{1.86e-4}{4.02e-5} & \imprsingle{--} \\
\rowcolor{gray!14} \textbf{+GUA} & \textbf{\qtwoentry{1.38e-2}{2.35e-3}} & \imprsingle{27.0} & \textbf{\qtwoentry{1.33e-2}{1.69e-3}} & \imprsingle{42.9} & \textbf{\qtwoentry{6.50e-4}{1.27e-4}} & \imprsingle{62.6} & \textbf{\qtwoentry{1.34e-3}{1.31e-4}} & \imprsingle{61.5} & \textbf{\qtwoentry{9.84e-5}{1.90e-5}} & \imprsingle{47.1}\\
\bottomrule
\end{tabular}

\vspace{2pt}

\begin{tabular}{c c c c c c c c c c c}
\toprule
\multicolumn{1}{c}{\multirow{2}{*}[-0.3ex]{\textbf{Method}}}
& \multicolumn{4}{c}{\textbf{Heat-MS}}
& \multicolumn{4}{c}{\textbf{Beltrami}}
& \multicolumn{2}{c}{\textbf{Poisson-5D}} \\
\cmidrule(lr){2-5} \cmidrule(lr){6-9} \cmidrule(lr){10-11}
& 2-loss & \imphead & 3-loss & \imphead & 2-loss & \imphead & 3-loss & \imphead & 2-loss & \imphead \\
\midrule
\rowcolor{gray!14} Adam & \qtwoentry{1.73e-2}{4.58e-3} & \imprsingle{--} & \qtwoentry{1.57e-1}{6.51e-2} & \imprsingle{--} & \qtwoentry{6.50e-3}{3.14e-4} & \imprsingle{--} & \qtwoentry{5.26e-3}{2.84e-4} & \imprsingle{--} & \qtwoentry{5.95e-4}{3.22e-5} & \imprsingle{--} \\
\rowcolor{gray!14} \textbf{+GUA} & \textbf{\qtwoentry{1.46e-2}{1.35e-3}} & \imprsingle{15.6} & \textbf{\qtwoentry{3.47e-2}{7.15e-3}} & \imprsingle{77.9} & \textbf{\qtwoentry{5.06e-3}{1.76e-4}} & \imprsingle{22.2} & \textbf{\qtwoentry{3.90e-3}{2.23e-4}} & \imprsingle{25.9} & \textbf{\qtwoentry{3.94e-4}{3.25e-5}} & \imprsingle{33.8}\\
\addlinespace[1.5pt]
PCGrad & \qtwoentry{2.05e-2}{4.36e-3} & \imprsingle{--} & \qtwoentry{3.40e-1}{1.34e-1} & \imprsingle{--} & \qtwoentry{6.70e-3}{4.75e-4} & \imprsingle{--} & \qtwoentry{5.26e-3}{2.96e-4} & \imprsingle{--} & \qtwoentry{5.88e-4}{4.29e-5} & \imprsingle{--} \\
\textbf{+GUA} & \textbf{\qtwoentry{1.35e-2}{4.40e-3}} & \imprsingle{34.1} & \textbf{\qtwoentry{4.94e-2}{1.43e-2}} & \imprsingle{85.5} & \textbf{\qtwoentry{5.22e-3}{3.12e-4}} & \imprsingle{22.1} & \textbf{\qtwoentry{4.18e-3}{3.07e-4}} & \imprsingle{20.5} & \textbf{\qtwoentry{3.38e-4}{2.34e-5}} & \imprsingle{42.5}\\
\addlinespace[1.5pt]
\rowcolor{gray!14} CAGrad & \qtwoentry{3.19e-2}{8.27e-3} & \imprsingle{--} & \qtwoentry{6.26e-1}{4.33e-2} & \imprsingle{--} & \qtwoentry{4.13e-3}{2.69e-4} & \imprsingle{--} & \qtwoentry{3.91e-3}{1.78e-4} & \imprsingle{--} & \qtwoentry{4.28e-4}{3.02e-4} & \imprsingle{--} \\
\rowcolor{gray!14} \textbf{+GUA} & \textbf{\qtwoentry{1.13e-2}{3.08e-3}} & \imprsingle{64.6} & \textbf{\qtwoentry{1.46e-1}{4.81e-2}} & \imprsingle{76.7} & \textbf{\qtwoentry{3.10e-3}{1.68e-4}} & \imprsingle{24.9} & \textbf{\qtwoentry{2.96e-3}{9.99e-5}} & \imprsingle{24.3} & \textbf{\qtwoentry{1.37e-4}{2.07e-5}} & \imprsingle{68.0}\\
\addlinespace[1.5pt]
IMTL-G & \qtwoentry{8.82e-1}{2.26e-1} & \imprsingle{--} & \qtwoentry{1.00e+0}{5.56e-4} & \imprsingle{--} & \qtwoentry{3.83e-3}{2.80e-4} & \imprsingle{--} & \qtwoentry{6.58e-3}{1.73e-3} & \imprsingle{--} & \qtwoentry{4.94e-4}{1.94e-4} & \imprsingle{--} \\
\textbf{+GUA} & \textbf{\qtwoentry{1.63e-2}{3.28e-3}} & \imprsingle{98.2} & \textbf{\qtwoentry{3.26e-1}{1.53e-1}} & \imprsingle{67.4} & \textbf{\qtwoentry{2.92e-3}{1.76e-4}} & \imprsingle{23.8} & \textbf{\qtwoentry{3.19e-3}{2.29e-4}} & \imprsingle{51.5} & \textbf{\qtwoentry{1.44e-4}{9.57e-6}} & \imprsingle{70.9}\\
\addlinespace[1.5pt]
\rowcolor{gray!14} A-MTL & \qtwoentry{7.61e-2}{3.73e-2} & \imprsingle{--} & \qtwoentry{7.71e-1}{3.92e-2} & \imprsingle{--} & \qtwoentry{4.06e-3}{2.26e-4} & \imprsingle{--} & \qtwoentry{3.19e-3}{2.05e-4} & \imprsingle{--} & \qtwoentry{5.16e-4}{2.59e-4} & \imprsingle{--} \\
\rowcolor{gray!14} \textbf{+GUA} & \textbf{\qtwoentry{2.31e-2}{7.88e-3}} & \imprsingle{69.6} & \textbf{\qtwoentry{1.47e-1}{5.10e-2}} & \imprsingle{80.9} & \textbf{\qtwoentry{2.99e-3}{2.01e-4}} & \imprsingle{26.4} & \textbf{\qtwoentry{2.82e-3}{1.18e-4}} & \imprsingle{11.6} & \textbf{\qtwoentry{2.93e-4}{1.93e-4}} & \imprsingle{43.2}\\
\addlinespace[1.5pt]
UPGrad & \qtwoentry{2.05e-2}{4.71e-3} & \imprsingle{--} & \qtwoentry{3.20e-1}{1.93e-1} & \imprsingle{--} & \qtwoentry{6.73e-3}{3.70e-4} & \imprsingle{--} & \qtwoentry{5.15e-3}{2.98e-4} & \imprsingle{--} & \qtwoentry{5.93e-4}{3.44e-5} & \imprsingle{--} \\
\textbf{+GUA} & \textbf{\qtwoentry{1.59e-2}{1.36e-3}} & \imprsingle{22.4} & \textbf{\qtwoentry{4.62e-2}{8.30e-3}} & \imprsingle{85.6} & \textbf{\qtwoentry{5.14e-3}{3.32e-4}} & \imprsingle{23.6} & \textbf{\qtwoentry{4.26e-3}{1.79e-4}} & \imprsingle{17.3} & \textbf{\qtwoentry{3.52e-4}{2.73e-5}} & \imprsingle{40.6}\\
\addlinespace[1.5pt]
\rowcolor{gray!14} ConFIG & \qtwoentry{5.77e-2}{1.31e-2} & \imprsingle{--} & \qtwoentry{8.16e-1}{2.98e-2} & \imprsingle{--} & \qtwoentry{4.01e-3}{2.75e-4} & \imprsingle{--} & \qtwoentry{4.11e-3}{1.45e-4} & \imprsingle{--} & \qtwoentry{3.16e-4}{2.94e-4} & \imprsingle{--} \\
\rowcolor{gray!14} \textbf{+GUA} & \textbf{\qtwoentry{9.85e-3}{1.56e-3}} & \imprsingle{82.9} & \textbf{\qtwoentry{1.45e-1}{9.18e-2}} & \imprsingle{82.2} & \textbf{\qtwoentry{2.92e-3}{1.80e-4}} & \imprsingle{27.2} & \textbf{\qtwoentry{2.82e-3}{1.82e-4}} & \imprsingle{31.4} & \textbf{\qtwoentry{1.40e-4}{2.76e-5}} & \imprsingle{55.7}\\
\bottomrule
\end{tabular}
\end{table*}

\newpage
\subsubsection{Conflict-Rate PINN Results for Q2}

Table~\ref{tab:pinn_conflict_rates} reports the corresponding
$R_g\!\rightarrow R_a\!\rightarrow R_u\!\rightarrow R_p$ transitions.
Unlike Q1, which fixes ConFIG and varies the optimizer, these experiments fix
Adam and vary the gradient surgery method. The resulting transitions show that
GUM is not specific to a particular pre-optimizer construction. Different
methods produce different $R_a$ and exhibit different responses to the same
optimizer transformation.
When $R_a=0$, any nonzero $R_u$ directly indicates strict GUM. When residual
pre-optimizer conflicts remain, an increase from $R_a$ to $R_u$ indicates
generalized GUM, showing that Adam further increases the frequency of
conflicting proposals. Despite these differences across gradient surgery
methods, GUA yields $R_p=0$ in every evaluated setting, so the applied update is
conflict-free after alignment. These transitions show that update-level
alignment remains effective across different pre-optimizer constructions and
connect the observed accuracy gains to the intended mechanism.
\begin{table*}[h]
\centering
\caption{Four-stage conflict rates (\%) for the PINN Q2 experiments, averaged over five seeds. }
\label{tab:pinn_conflict_rates}
\scriptsize
\setlength{\tabcolsep}{1.0pt}
\renewcommand{\arraystretch}{1.15}
\begin{tabular}{c *{4}{w{c}{15pt}} @{} w{c}{7pt} @{} *{4}{w{c}{15pt}} @{} w{c}{7pt} @{} *{4}{w{c}{15pt}} @{} w{c}{7pt} @{} *{4}{w{c}{15pt}} @{} w{c}{7pt} @{} *{4}{w{c}{15pt}}}
\toprule
\multicolumn{1}{c}{\multirow{3}{*}[-0.5ex]{\textbf{Method}}}
& \multicolumn{9}{c}{\textbf{Schr\"odinger}} & \multicolumn{1}{c}{}
& \multicolumn{9}{c}{\textbf{Burgers}} & \multicolumn{1}{c}{}
& \multicolumn{4}{c}{\textbf{Kovasznay}} \\
\cmidrule(lr){2-10} \cmidrule(lr){12-20} \cmidrule(lr){22-25}
& \multicolumn{4}{c}{\textbf{2-loss}} & & \multicolumn{4}{c}{\textbf{3-loss}} & & \multicolumn{4}{c}{\textbf{2-loss}} & & \multicolumn{4}{c}{\textbf{3-loss}} & & \multicolumn{4}{c}{\textbf{2-loss}} \\
\cmidrule(lr){2-5} \cmidrule(lr){7-10} \cmidrule(lr){12-15} \cmidrule(lr){17-20} \cmidrule(lr){22-25}
& $R_g$ & $R_a$ & $R_u$ & $R_p$ & & $R_g$ & $R_a$ & $R_u$ & $R_p$ & & $R_g$ & $R_a$ & $R_u$ & $R_p$ & & $R_g$ & $R_a$ & $R_u$ & $R_p$ & & $R_g$ & $R_a$ & $R_u$ & $R_p$ \\
\midrule
\rowcolor{gray!14} Adam & 44.3 & 15.9 & 13.2 & -- & & 65.2 & 38.2 & 35.2 & -- & & 28.5 & 20.0 & 22.6 & -- & & 55.4 & 36.1 & 34.0 & -- & & 3.0 & 1.3 & 1.2 & -- \\
\rowcolor{gray!14} \textbf{+GUA} & 36.4 & 13.1 & 13.6 & 0.0 & & 61.5 & 35.6 & 42.1 & 0.0 & & 20.0 & 14.2 & 11.6 & 0.0 & & 35.9 & 23.2 & 15.9 & 0.0 & & 0.9 & 0.3 & 0.3 & 0.0 \\
\addlinespace[1.5pt]
PCGrad & 36.3 & 0.0 & 6.3 & -- & & 51.9 & 0.3 & 16.0 & -- & & 35.1 & 0.0 & 31.6 & -- & & 56.6 & 4.7 & 45.0 & -- & & 2.2 & 0.0 & 0.6 & -- \\
\textbf{+GUA} & 32.3 & 0.0 & 5.1 & 0.0 & & 49.6 & 0.4 & 16.2 & 0.0 & & 33.8 & 0.0 & 29.9 & 0.0 & & 63.8 & 4.8 & 51.7 & 0.0 & & 1.1 & 0.0 & 0.2 & 0.0 \\
\addlinespace[1.5pt]
\rowcolor{gray!14} CAGrad & 35.0 & 5.1 & 15.4 & -- & & 43.7 & 7.4 & 23.0 & -- & & 31.1 & 3.8 & 27.1 & -- & & 56.1 & 12.4 & 42.5 & -- & & 6.1 & 1.4 & 7.9 & -- \\
\rowcolor{gray!14} \textbf{+GUA} & 29.5 & 1.2 & 7.1 & 0.0 & & 41.9 & 3.5 & 15.8 & 0.0 & & 29.0 & 3.8 & 21.6 & 0.0 & & 63.2 & 16.3 & 43.0 & 0.0 & & 1.9 & 0.1 & 6.6 & 0.0 \\
\addlinespace[1.5pt]
IMTL-G & 67.8 & 0.0 & 33.2 & -- & & 94.4 & 16.7 & 53.4 & -- & & 41.9 & 0.0 & 23.7 & -- & & 94.5 & 1.7 & 55.6 & -- & & 3.6 & 0.0 & 4.8 & -- \\
\textbf{+GUA} & 52.8 & 0.0 & 22.5 & 0.0 & & 86.8 & 2.3 & 44.2 & 0.0 & & 33.5 & 0.0 & 15.7 & 0.0 & & 84.3 & 3.1 & 49.0 & 0.0 & & 0.9 & 0.0 & 2.9 & 0.0 \\
\addlinespace[1.5pt]
\rowcolor{gray!14} A-MTL & 33.6 & 0.2 & 18.3 & -- & & 57.7 & 4.5 & 37.5 & -- & & 27.5 & 0.8 & 34.2 & -- & & 66.8 & 9.4 & 57.2 & -- & & 5.0 & 0.1 & 11.5 & -- \\
\rowcolor{gray!14} \textbf{+GUA} & 27.2 & 0.1 & 15.9 & 0.0 & & 44.5 & 0.6 & 29.4 & 0.0 & & 30.2 & 1.1 & 35.7 & 0.0 & & 63.7 & 4.9 & 57.4 & 0.0 & & 2.2 & 0.0 & 9.5 & 0.0 \\
\addlinespace[1.5pt]
UPGrad & 35.7 & 0.4 & 6.2 & -- & & 50.5 & 3.8 & 13.2 & -- & & 33.5 & 4.7 & 29.0 & -- & & 54.0 & 8.8 & 41.1 & -- & & 2.3 & 0.0 & 0.7 & -- \\
\textbf{+GUA} & 31.6 & 0.4 & 4.8 & 0.0 & & 50.4 & 5.2 & 14.8 & 0.0 & & 32.6 & 7.4 & 28.2 & 0.0 & & 60.3 & 18.5 & 47.4 & 0.0 & & 1.1 & 0.0 & 0.2 & 0.0 \\
\addlinespace[1.5pt]
\rowcolor{gray!14} ConFIG & 40.8 & 0.0 & 23.7 & -- & & 56.3 & 0.0 & 35.2 & -- & & 30.5 & 0.0 & 29.4 & -- & & 73.7 & 0.0 & 57.4 & -- & & 5.5 & 0.0 & 6.4 & -- \\
\rowcolor{gray!14} \textbf{+GUA} & 33.8 & 0.0 & 13.9 & 0.0 & & 50.7 & 0.0 & 27.6 & 0.0 & & 30.0 & 0.0 & 24.2 & 0.0 & & 73.0 & 0.0 & 55.3 & 0.0 & & 1.5 & 0.0 & 6.1 & 0.0 \\
\bottomrule
\end{tabular}

\vspace{2pt}

\begin{tabular}{c *{4}{w{c}{15pt}} @{} w{c}{7pt} @{} *{4}{w{c}{15pt}} @{} w{c}{7pt} @{} *{4}{w{c}{15pt}} @{} w{c}{7pt} @{} *{4}{w{c}{15pt}} @{} w{c}{7pt} @{} *{4}{w{c}{15pt}}}
\toprule
\multicolumn{1}{c}{\multirow{3}{*}[-0.5ex]{\textbf{Method}}}
& \multicolumn{9}{c}{\textbf{Heat-MS}} & \multicolumn{1}{c}{}
& \multicolumn{9}{c}{\textbf{Beltrami}} & \multicolumn{1}{c}{}
& \multicolumn{4}{c}{\textbf{Poisson-5D}} \\
\cmidrule(lr){2-10} \cmidrule(lr){12-20} \cmidrule(lr){22-25}
& \multicolumn{4}{c}{\textbf{2-loss}} & & \multicolumn{4}{c}{\textbf{3-loss}} & & \multicolumn{4}{c}{\textbf{2-loss}} & & \multicolumn{4}{c}{\textbf{3-loss}} & & \multicolumn{4}{c}{\textbf{2-loss}} \\
\cmidrule(lr){2-5} \cmidrule(lr){7-10} \cmidrule(lr){12-15} \cmidrule(lr){17-20} \cmidrule(lr){22-25}
& $R_g$ & $R_a$ & $R_u$ & $R_p$ & & $R_g$ & $R_a$ & $R_u$ & $R_p$ & & $R_g$ & $R_a$ & $R_u$ & $R_p$ & & $R_g$ & $R_a$ & $R_u$ & $R_p$ & & $R_g$ & $R_a$ & $R_u$ & $R_p$ \\
\midrule
\rowcolor{gray!14} Adam & 14.0 & 7.5 & 4.8 & -- & & 72.8 & 40.1 & 54.1 & -- & & 1.6 & 0.2 & 0.2 & -- & & 4.2 & 0.6 & 0.5 & -- & & 41.3 & 7.2 & 3.8 & -- \\
\rowcolor{gray!14} \textbf{+GUA} & 17.6 & 6.8 & 2.4 & 0.0 & & 32.9 & 17.7 & 19.9 & 0.0 & & 2.0 & 0.2 & 0.8 & 0.0 & & 3.6 & 0.4 & 1.6 & 0.0 & & 41.3 & 4.5 & 3.7 & 0.0 \\
\addlinespace[1.5pt]
PCGrad & 14.9 & 0.0 & 5.1 & -- & & 77.6 & 6.2 & 57.7 & -- & & 1.2 & 0.0 & 0.2 & -- & & 3.6 & 0.1 & 0.3 & -- & & 24.1 & 0.0 & 3.8 & -- \\
\textbf{+GUA} & 14.6 & 0.0 & 2.2 & 0.0 & & 34.6 & 0.7 & 17.9 & 0.0 & & 1.7 & 0.0 & 0.6 & 0.0 & & 3.6 & 0.0 & 1.1 & 0.0 & & 42.4 & 0.0 & 4.3 & 0.0 \\
\addlinespace[1.5pt]
\rowcolor{gray!14} CAGrad & 20.4 & 1.7 & 14.3 & -- & & 78.9 & 21.8 & 60.2 & -- & & 5.8 & 0.3 & 4.9 & -- & & 20.2 & 0.9 & 10.8 & -- & & 48.5 & 5.1 & 22.9 & -- \\
\rowcolor{gray!14} \textbf{+GUA} & 20.1 & 1.6 & 14.2 & 0.0 & & 37.8 & 2.8 & 21.0 & 0.0 & & 7.9 & 0.1 & 5.6 & 0.0 & & 13.6 & 0.1 & 6.7 & 0.0 & & 35.9 & 0.2 & 14.0 & 0.0 \\
\addlinespace[1.5pt]
IMTL-G & 53.7 & 0.0 & 30.6 & -- & & 88.5 & 5.4 & 18.7 & -- & & 5.1 & 0.0 & 5.9 & -- & & 89.5 & 0.0 & 28.1 & -- & & 51.0 & 0.0 & 33.8 & -- \\
\textbf{+GUA} & 28.9 & 0.0 & 13.4 & 0.0 & & 60.9 & 4.1 & 36.3 & 0.0 & & 4.1 & 0.0 & 4.4 & 0.0 & & 24.5 & 0.1 & 10.7 & 0.0 & & 44.1 & 0.0 & 26.2 & 0.0 \\
\addlinespace[1.5pt]
\rowcolor{gray!14} A-MTL & 23.7 & 0.1 & 17.7 & -- & & 75.4 & 15.0 & 64.1 & -- & & 14.5 & 0.0 & 14.8 & -- & & 28.2 & 0.0 & 19.9 & -- & & 48.4 & 0.4 & 30.8 & -- \\
\rowcolor{gray!14} \textbf{+GUA} & 26.2 & 0.3 & 11.8 & 0.0 & & 55.8 & 3.5 & 27.9 & 0.0 & & 10.2 & 0.0 & 9.7 & 0.0 & & 14.3 & 0.0 & 13.0 & 0.0 & & 42.3 & 0.0 & 25.0 & 0.0 \\
\addlinespace[1.5pt]
UPGrad & 13.1 & 0.4 & 4.7 & -- & & 78.3 & 12.3 & 55.5 & -- & & 1.2 & 0.0 & 0.2 & -- & & 3.3 & 0.0 & 0.3 & -- & & 23.7 & 0.0 & 3.9 & -- \\
\textbf{+GUA} & 15.4 & 0.4 & 2.4 & 0.0 & & 33.8 & 4.8 & 16.1 & 0.0 & & 1.8 & 0.0 & 0.7 & 0.0 & & 4.5 & 0.0 & 1.2 & 0.0 & & 42.2 & 0.0 & 4.2 & 0.0 \\
\addlinespace[1.5pt]
\rowcolor{gray!14} ConFIG & 22.8 & 0.0 & 12.8 & -- & & 81.3 & 0.0 & 64.6 & -- & & 13.1 & 0.0 & 11.6 & -- & & 27.5 & 0.0 & 16.8 & -- & & 52.6 & 0.0 & 30.5 & -- \\
\rowcolor{gray!14} \textbf{+GUA} & 27.7 & 0.0 & 9.7 & 0.0 & & 48.6 & 0.0 & 26.6 & 0.0 & & 7.4 & 0.0 & 7.6 & 0.0 & & 12.8 & 0.0 & 8.8 & 0.0 & & 44.1 & 0.0 & 22.3 & 0.0 \\
\bottomrule
\end{tabular}
\end{table*}

\newpage
\subsection{Performance across Optimizers}
\label{app:gua_cross_optimizer}

The main experiments use Adam as the default optimizer. We further evaluate GUA with three representative optimizers from Q1: RMSProp for coordinate-wise adaptive scaling, AdamW for historical state, adaptive scaling, and decoupled weight decay, and SOAP for second-order preconditioning. SOAP is particularly relevant because its preconditioning has been shown to promote gradient alignment and mitigate conflicts in PINNs~\cite{wang2025gradient}. For each optimizer $\mathcal O$, we compare $\text{ConFIG}+\mathcal O$ with $\text{ConFIG}+\mathcal O+\text{GUA}$. Since these optimizers differ in transformation geometry and internal state, we adapt the projection metric and State Alignment strategy accordingly.

At each step, the optimizer produces a proposal $u_t$, which GUA projects onto
the conflict-free cone under the time-dependent metric $M_t$:
\[
\min_{d_t}\;\frac{1}{2}(d_t-u_t)^\top M_t(d_t-u_t)
\qquad
\text{s.t.}\quad G_t^\top d_t\ge 0.
\]
We use the second-moment geometry induced by each optimizer rather than a common Euclidean metric. Specifically,
$
M_{\mathrm{RMSProp},t}\approx\operatorname{diag}(\sqrt{v_t}+\epsilon),
M_{\mathrm{AdamW},t}\approx\operatorname{diag}(\sqrt{\hat v_t}+\epsilon).
$
For SOAP, the proposal and task gradients are represented in the current Shampoo coordinate system, where
$M_{\mathrm{SOAP},t}\approx\operatorname{diag}(\sqrt{\tilde v_t}+\epsilon)$,
and the projected update is then mapped back to parameter space. The dual solver uses
$K_t=G_t^\top M_t^{-1}G_t$ and reconstructs
$p_t=u_t+M_t^{-1}G_t\lambda_t$.

After applying the projected update, GUA softly aligns the optimizer state toward targets reconstructed from the applied update. RMSProp aligns its second-moment state with $(\rho_m,\rho_v)=(0,0.5)$. AdamW aligns its first- and second-moment states with $(0.1,0.03)$ while handling decoupled weight decay separately. SOAP aligns the exponential moving averages that determine its update with $(0.03,0.003)$, while keeping the Shampoo preconditioner and coordinate basis unchanged. State Alignment is applied only when GUA modifies the optimizer proposal.
RMSProp uses a learning rate of $10^{-4}/10^{-4}$, whereas AdamW and SOAP use $10^{-3}/10^{-4}$.

As shown in Figure~\ref{fig:gua_cross_optimizer}, GUA improves the displayed
settings across optimizer families. For Burgers, it reduces the relative $L_2$
error of RMSProp, AdamW, and SOAP by $52.35\%$, $65.19\%$, and $13.78\%$ in
the two-loss setting, and by $58.39\%$, $62.92\%$, and $42.73\%$ in the
three-loss setting.
For Schr\"odinger, the corresponding reductions are $22.98\%$, $40.00\%$, and
$1.70\%$ in the two-loss setting, and $47.50\%$, $43.66\%$, and $4.77\%$ in
the three-loss setting, respectively.
Notably, GUA further improves SOAP even though its second-order preconditioning
already promotes gradient alignment. This suggests that explicit update-level
alignment can provide complementary benefits beyond the alignment induced by
the optimizer's preconditioning geometry. Together with the Adam results in Q2, these results suggest that
GUA extends beyond Adam to optimizers involving adaptive scaling, decoupled
weight decay, and preconditioning.

\begin{figure*}[h]
\centering
\includegraphics[width=\textwidth]{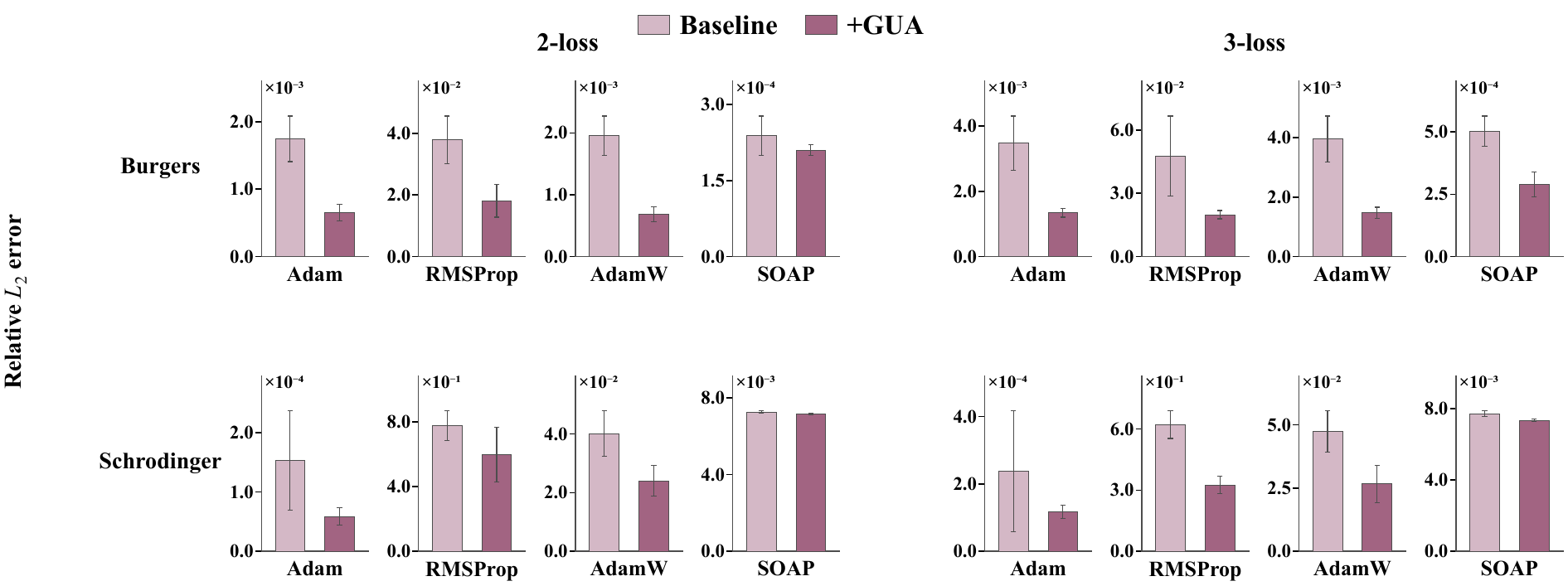}
\caption{Relative $L_2$ error across optimizer families for the Burgers and Schrödinger equations.}
\label{fig:gua_cross_optimizer}
\end{figure*}

\newpage
\subsection{Mechanism Diagnostics}
\label{app:mechanism_diagnostics}
\subsubsection{Same-State Norm-Matched Comparison}
\label{app:same_state_counterfactual_details}

Table~\ref{tab:q3_equation_win_rates} reports the per-benchmark results for the
same-state, norm-matched comparison introduced in Q3. At each evaluated step,
the parameters, optimizer state, batch, and learning rate are held fixed while
comparing the optimizer proposal $u_t$, the aligned update $p_t$, and the
norm-matched control
\[
q_t=\frac{\|p_t\|_2}{\|u_t\|_2}u_t.
\]
All reported samples satisfy $p_t\neq0$. We report the pairwise win rates
$\Pr[W_t^{\mathrm{rel}}(p_t)<W_t^{\mathrm{rel}}(u_t)]$ and
$\Pr[W_t^{\mathrm{rel}}(p_t)<W_t^{\mathrm{rel}}(q_t)]$.
Training states are first aggregated within each run before statistics are
computed across seeds.

\begin{table}[H]
\centering
\caption{Per-benchmark Q3 win rates under two- and three-loss decompositions.}
\label{tab:q3_equation_win_rates}
\normalsize
\setlength{\tabcolsep}{5.5pt}
\begin{tabular}{ccccc}
\toprule
\multicolumn{1}{c}{\multirow{2}{*}[-0.3ex]{\textbf{Equation}}}
& \multicolumn{2}{c}{\textbf{2-loss}} & \multicolumn{2}{c}{\textbf{3-loss}} \\
\cmidrule(lr){2-3} \cmidrule(lr){4-5}
& $p_t<u_t$ (\%) & $p_t<q_t$ (\%) & $p_t<u_t$ (\%) & $p_t<q_t$ (\%) \\
\midrule
Schr\"odinger & 95.81 & 92.68 & 96.26 & 93.53 \\
Burgers       & 96.55 & 95.17 & 94.57 & 93.55 \\
Heat-MS       & 97.92 & 96.54 & 93.16 & 92.47 \\
Beltrami      & 94.77 & 93.38 & 94.05 & 93.00 \\
Kovasznay     & 98.25 & 97.54 & -- & -- \\
Poisson-5D    & 89.94 & 85.77 & -- & -- \\
\midrule
Overall       & 95.54 & 93.51 & 94.51 & 93.14 \\
\bottomrule
\end{tabular}
\end{table}

\subsubsection{Representative Gradient-Direction Slices}
\label{app:gradient_direction_slices}

To make the update-level geometry concrete, Figure~\ref{fig:gradient_direction_slices}
visualizes representative two-loss training steps across the six PDE benchmarks.
For ease of visualization, we rotate each directional slice so that one
loss-specific gradient, $g_{1,t}$, is fixed along the positive horizontal axis.
Specifically, for each full-space vector $x$, we orthogonally project it onto
\(\operatorname{span}(g_{1,t},g_{2,t})\) and express the projection in the
orthonormal basis \(e_1=g_{1,t}/\|g_{1,t}\|_2\) and
\(e_2=(g_{2,t}-\langle g_{2,t},e_1\rangle e_1)/
\|g_{2,t}-\langle g_{2,t},e_1\rangle e_1\|_2\), choosing the sign of $e_2$
so that $g_{2,t}$ has a nonnegative second coordinate. We then normalize the
displayed vectors and therefore ignore their original
magnitudes. The figure is intended to show only their relative directions and
the resulting conflict-free geometry.
Each slice shows the two loss-specific gradient directions, the constructed
direction $a_t$, the optimizer proposal $u_t$, and the aligned update $p_t$.
The shaded region denotes the corresponding conflict-free cone
$\mathcal C_t$. These examples illustrate how optimizer transformation can move
a conflict-free constructed direction outside $\mathcal C_t$, while GUA
realigns the resulting proposal with the current conflict-free cone.

\begin{figure*}[t]
\centering
\includegraphics[width=0.88\textwidth,height=0.024\textheight,keepaspectratio]{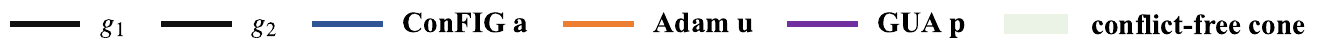}
\begin{minipage}[t]{0.98\textwidth}
\centering
\includegraphics[width=\linewidth,height=0.135\textheight,keepaspectratio]{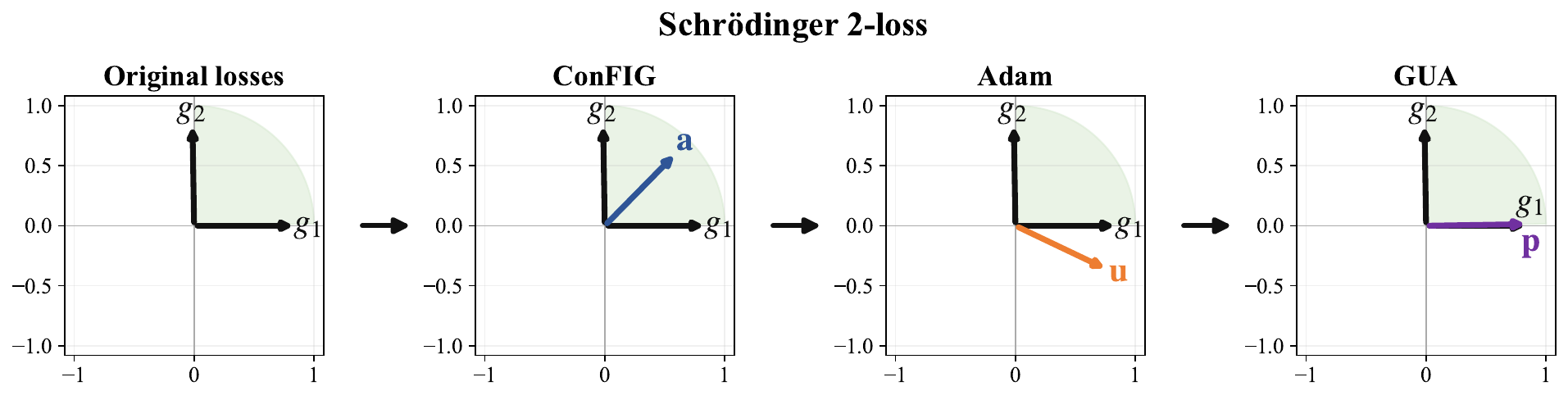}
\end{minipage}
\par\vspace{0.25em}
\begin{minipage}[t]{0.98\textwidth}
\centering
\includegraphics[width=\linewidth,height=0.135\textheight,keepaspectratio]{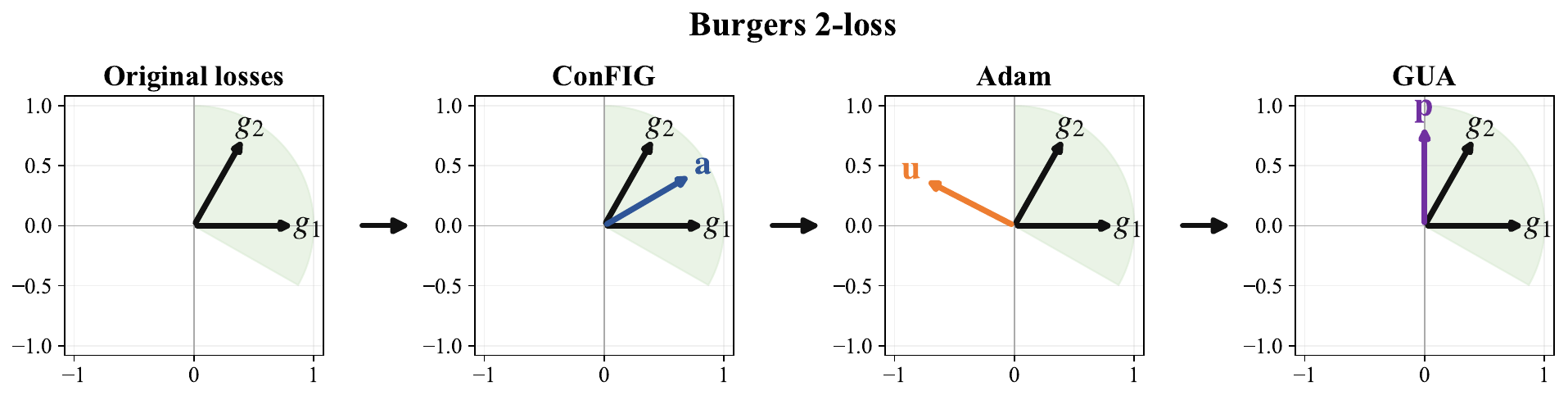}
\end{minipage}
\par\vspace{0.25em}
\begin{minipage}[t]{0.98\textwidth}
\centering
\includegraphics[width=\linewidth,height=0.135\textheight,keepaspectratio]{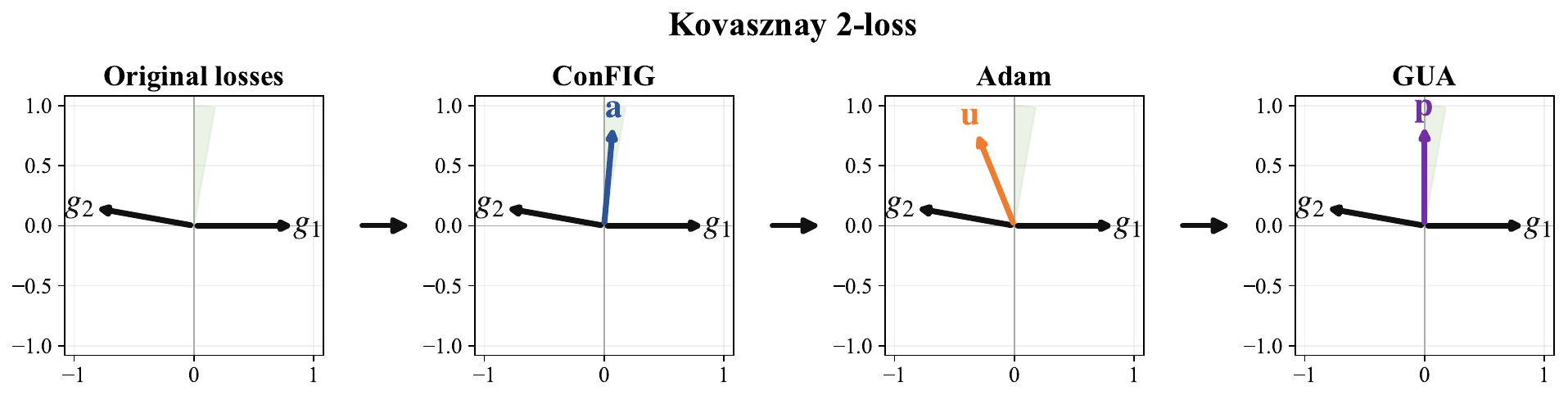}
\end{minipage}
\par\vspace{0.25em}
\begin{minipage}[t]{0.98\textwidth}
\centering
\includegraphics[width=\linewidth,height=0.135\textheight,keepaspectratio]{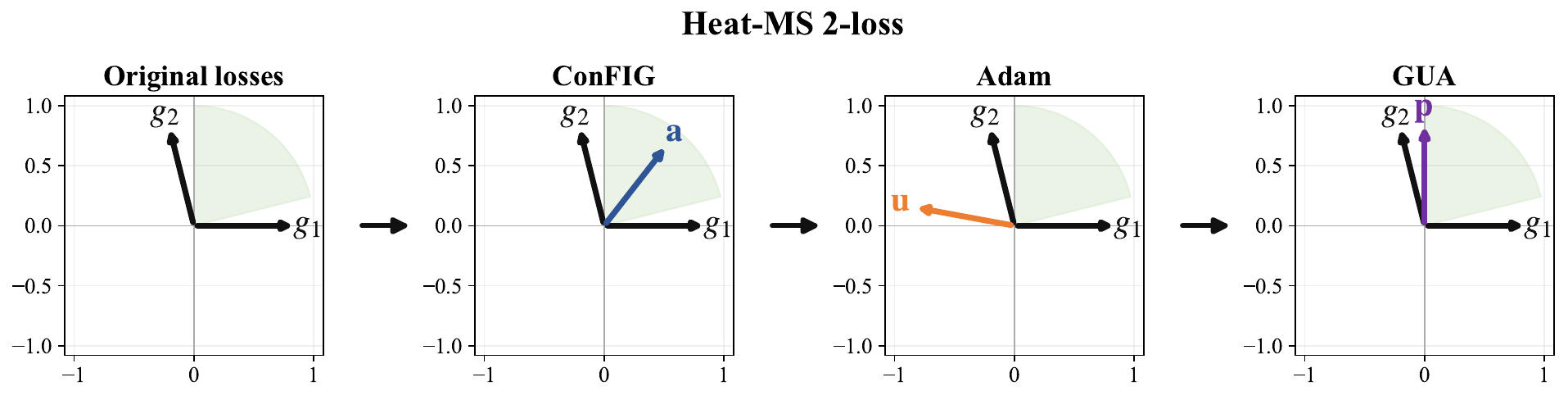}
\end{minipage}
\par\vspace{0.25em}
\begin{minipage}[t]{0.98\textwidth}
\centering
\includegraphics[width=\linewidth,height=0.135\textheight,keepaspectratio]{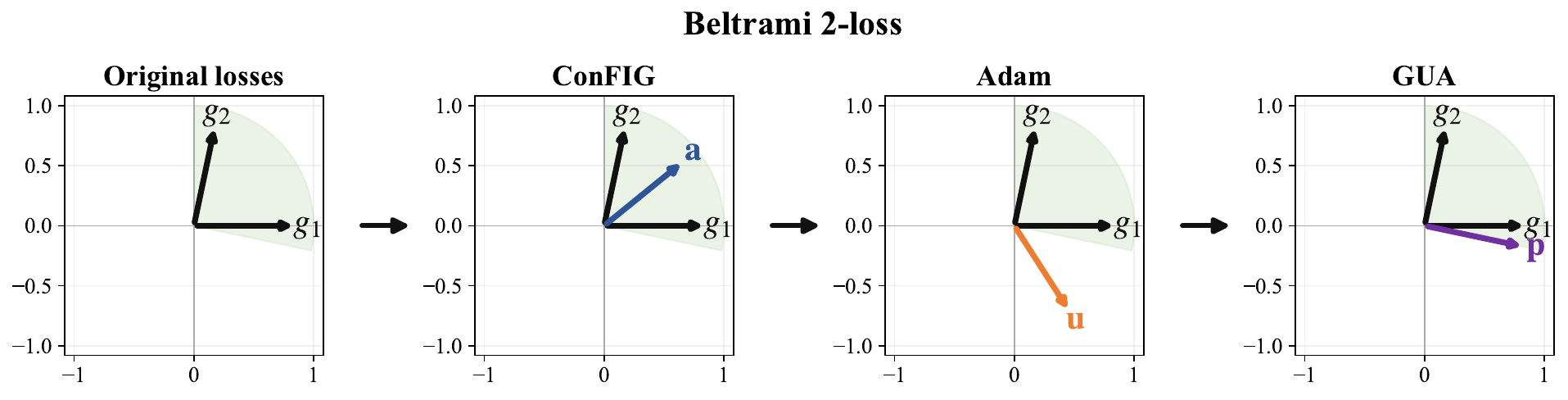}
\end{minipage}
\par\vspace{0.25em}
\begin{minipage}[t]{0.98\textwidth}
\centering
\includegraphics[width=\linewidth,height=0.135\textheight,keepaspectratio]{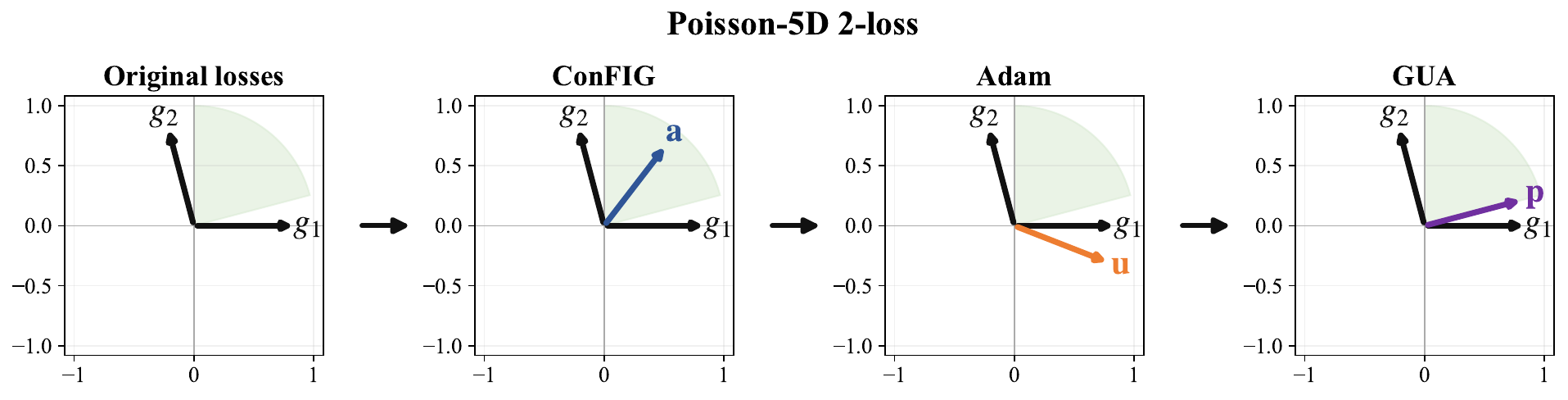}
\end{minipage}
\caption{Representative gradient directions for two-loss PDEs.}
\label{fig:gradient_direction_slices}
\end{figure*}

\clearpage

\section{Multi-task Learning Details}
\label{app:mtl_details}

We use CelebA~\cite{liu2015deep} as a controlled task-cardinality benchmark,
following the FAMO-based experimental setup adopted by
ConFIG~\cite{liu2025config}. For an $m$-task experiment, we use the first $m$
attributes in the official annotation order, so the task-count settings form
nested subsets. Images are resized to $64\times64$, converted to tensors, and
used without additional data augmentation. The model consists of a shared
convolutional backbone with BatchNorm, ReLU, max pooling, adaptive average
pooling, and two fully connected layers, followed by task-specific binary
classification heads.

Training minimizes binary cross-entropy independently for each attribute.
The main training hyperparameters are summarized in
Table~\ref{tab:celeba_training_settings}. To remain consistent with the gradient-surgery baselines, which operate on the shared parameters, GUA is applied to the same shared parameters after task-gradient aggregation and optimizer proposal. The
task-specific classification heads are updated by the native optimizer but are
not modified by GUA. The data pipeline, model, and task losses remain
unchanged. All
reported results are averaged over three independent seeds $\{0,1,2\}$.

\begin{table}[!htbp]
\centering
\caption{CelebA multi-task learning training configuration.}
\label{tab:celeba_training_settings}
\normalsize
\setlength{\tabcolsep}{5pt}
\renewcommand{\arraystretch}{1.08}
\begin{tabular}{@{}c>{\centering\arraybackslash}p{0.62\linewidth}@{}}
\toprule
\textbf{Item} & \textbf{Setting} \\
\midrule
Available tasks  & 40 binary facial attributes; evaluated task counts \(\{2,3,5,10,20,30,40\}\) \\
Input  & RGB images resized to $64\times64$; no augmentation \\
Loss  & Binary cross-entropy per task \\
Epochs  & 15 \\
Batch size  & 256 \\
Optimizer  & Adam \\
Learning rate  & $3\times10^{-4}$ \\
Evaluation frequency & Every epoch \\
\bottomrule
\end{tabular}
\end{table}

\end{document}